\PassOptionsToPackage{noend}{algorithm2e}
\documentclass[12pt]{colt2018_mod}

\usepackage[utf8]{inputenc} % allow utf-8 input
\usepackage[T1]{fontenc}    % use 8-bit T1 fonts
\usepackage{amsfonts}       % blackboard math symbols
\usepackage{nicefrac}       % compact symbols for 1/2, etc.
\usepackage{microtype}      % microtypography

\usepackage{color}
\usepackage{latexsym}
\usepackage{stmaryrd}    % symbols like llbracket

\usepackage{algorithm}
\usepackage{subcaption}
\usepackage{thmtools, thm-restate}

\newtheorem*{theorem*}{Theorem}

\usepackage{etoolbox,xparse,xspace,setspace}
\usepackage{array,multirow,booktabs} % nice tables
\usepackage{bbm} % for \mathbbm{1}

\DeclareMathOperator{\E}{\mathbb{E}}
\DeclareMathOperator*{\opmax}{\vee}
\DeclareMathOperator*{\opmin}{\wedge}

\newcommand{\F}{\mathcal{F}}
\newcommand{\G}{\mathcal{G}}
\newcommand{\W}{\mathcal{W}}

\newcommand{\rad}{\mathcal{R}}
\newcommand{\good}{\textsc{good}}
\newcommand{\bad}{\textsc{bad}}
\newcommand{\loss}{\ell}
\newcommand{\reals}{\mathbb{R}}
\newcommand{\X}{\mathcal{X}}
\newcommand{\Y}{\mathcal{Y}}
\newcommand{\Z}{\mathcal{Z}}

\newcommand{\N}{\mathcal{N}}
\newcommand{\D}{\mathcal{D}}
\newcommand{\B}{\mathcal{B}}

\newcommand{\nth}{^{\text{th}}}
\newcommand{\ind}[1]{\left\llbracket #1 \right\rrbracket}
\newcommand{\Prob}{\mathsf{P}\mathop{}}
\newcommand{\Pn}{\mathsf{P}_n\mathop{}}

\newcommand{\cPn}{\tilde{\mathsf{P}}_n\mathop{}} % the 'c' is for Corrupted
\newcommand{\cf}{\tilde{f}} % c for Corrupted estimation

\renewcommand{\Y}{\mathcal{Y}}

\newcommand{\Var}{\mathop{\mathrm{Var}}}
\newcommand{\ns}{N}  % total number of samples

\newcommand{\argmin}{\operatornamewithlimits{argmin}}

\newcommand{\algto}{~\textup{\textbf{to}}~}
\newcommand{\ucb}{\mathrm{ucb}}
\newcommand{\tinyeq}{\!=\!}

\newcommand{\given}{\;|\;}
\renewcommand{\epsilon}{\varepsilon}

%% Define \Proof environment
\makeatletter
\DeclareRobustCommand{\qed}{%
  \ifmmode % if math mode, assume display: omit penalty etc.
  \else \leavevmode\unskip\penalty9999 \hbox{}\nobreak\hfill
  \fi
  \quad\hbox{\qedsymbol}}
\newcommand{\qedsymbol}{\BlackBox}
\newenvironment{Proof}[1][\proofname]{\par
  \normalfont
  \topsep6\p@\@plus6\p@ \trivlist
  \item[\hskip\labelsep\bfseries
    #1]\ignorespaces
}{%
  \qed\endtrivlist
}
\newcommand{\proofname}{Proof}
\makeatother
%% End definition of \Proof environment

\title[Multi-Observation Regression]{Multi-Observation Regression}
\usepackage{times}

\coltauthor{\Name{Rafael Frongillo} \Email{raf@colorado.edu}\\
  \addr University of Colorado, Boulder
  \AND
  \Name{Nishant A. Mehta} \Email{nmehta@uvic.ca}\\
  \addr University of Victoria
  \AND
  \Name{Tom Morgan} \Email{tdmorgan@seas.harvard.edu}\\
  \addr Harvard University
  \AND
  \Name{Bo Waggoner} \Email{bwag@seas.upenn.edu}\\
  \addr University of Pennsylvania
}

\begin{document}

\maketitle

\begin{abstract}
  Recent work introduced loss functions which measure the error of a prediction based on multiple simultaneous observations or outcomes.
  In this paper, we explore the theoretical and practical questions that arise when using such multi-observation losses for regression on data sets of $(x,y)$ pairs. 
When a loss depends on only one observation, the average empirical loss decomposes by applying the loss to each pair, but for the multi-observation case, empirical loss is not even well-defined, and the possibility of statistical guarantees is unclear without several $(x,y)$ pairs with exactly the same $x$ value. 
  We propose four algorithms formalizing the concept of empirical risk minimization for this problem, two of which have statistical guarantees in settings allowing both slow and fast convergence rates, but which are out-performed empirically by the other two.
  Empirical results demonstrate practicality of these algorithms in low-dimensional settings, while lower bounds demonstrate intrinsic difficulty in higher dimensions.
  Finally, we demonstrate the potential benefit of the algorithms over natural baselines that use traditional single-observation losses via both lower bounds and simulations.
\end{abstract}

\begin{keywords}
  Information elicitation, empirical risk minimization, regression, high-order statistics
\end{keywords}

\section{Introduction}
\label{sec:intro}

The inference method of empirical risk minimization (ERM) selects a hypothesis by minimizing some loss function over a data set.
The question of which loss to use in which situation, however, is often a matter of debate.
Traditional loss functions have a familiar form: $\loss(f(x),y)$, where $f$ is a hypothesis, $x$ is a feature vector of some kind, and $y$ is a label or observation.
Notably, such losses score the accuracy of a hypothesis's prediction $f(x)$ based on a single observation $y$.

Recent work~\citep{casalaina-martin2017multi-observation} proposed a natural generalization of ERM which depends on \emph{multiple} observations: $\loss(f(x),y_1,y_2,\ldots,y_m)$.
These \emph{multi-observation} losses\footnote{Outside of the regression setting considered here, multi-observation losses have also appeared in the context of learning embeddings~\citep{hadsell2006dimensionality,schroff2015facenet:,ustinova2016learning}, and some techniques from time series analysis can be viewed as a special case of multi-observation regression~\citep{engle1986modelling}.}
  allow more natural and direct capture of ``higher-order'' properties of the conditional distribution on $y$ given $x$.
Their advantages can be formalized via \emph{elicitation complexity}~\citep{lambert2008eliciting,frongillo2015elicitation-2,casalaina-martin2017multi-observation}.
For example, to regress on the variance of $y$ given $x$, any statistically consistent loss function can be shown to require either a multi-dimensional range of $f(x)$ or multiple observations.
Further, some statistics, such as the 2-norm of the conditional distribution, can be modeled much more simply with multiple observations than with multiple dimensions~\citep{casalaina-martin2017multi-observation}.
In this paper, we give two more examples: the upper confidence bound (UCB), and MINVAR~\citep{cherny2009new}.
See \S~\ref{sec:background} for more on elicitation complexity.

Despite the groundwork laid by~\cite{casalaina-martin2017multi-observation}, a very important methodological question remains: given a data set of $(x,y)$ pairs, how should one use such a multi-observation loss function to actually perform ERM?
For the single-observation case, up to regularization, ERM is simply $\argmin_{f\in\F} \sum_{(x,y)\in\D} \loss(f(x),y)$.
But the analogue in the multi-observation case is $\argmin_{f\in\F} \sum \loss(f(x), y_1,\dots,y_m)$, requiring data of the form $(x,y_1,y_2,\ldots,y_m)$, i.e., multiple observations or labels $y$ for every feature vector $x$.
While data of this form is sometimes obtainable, we would like to perform ERM in the typical case where each $x$ has only one associated $y$ value.

\paragraph{Contributions.}
In this paper, we present several algorithms to perform multi-observation regression given data sets of $(x,y)$ pairs. These algorithms reduce to ERM after constructing \emph{metasamples} of the form $(x,y_1,\ldots,y_m)$.
The first two are \emph{unbiased} algorithms, which take care to ensure that the $x$ values are drawn independently from the underlying distribution.
We also present two \emph{biased} algorithms which perform better in practice but whose statistical guarantees are much harder to analyze due to the lack of i.i.d.\ $x$ values.

For our unbiased algorithms, we prove statistical convergence guarantees, with both slow and fast rates depending on the setting. 
Thus, we give the first excess risk bounds for ERM with multi-observation losses, aside from very preliminary (and much weaker) bounds of~\cite{casalaina-martin2017multi-observation}.
The key technique is an analysis of ERM with \emph{corrupted samples}, i.e.~samples from distributions near but not equal to the same underlying distribution.
We consider both labeled and unlabeled sample complexity and discuss the importance of the distinction under the multi-observation paradigm. 
Our algorithmic and sample complexity contributions are most valuable and practically relevant in the low-dimensional regime; as we show in \S~\ref{sec:dimension} and \S~\ref{sec:stone-lower-bound}, poor dependence on the dimension $d$ of $x$ is information-theoretically unavoidable when predicting higher-order statistics.

Finally, we demonstrate the advantage of multi-observation regression over the traditional single-observation approach in cases such as predicting conditional variance.
The intuition is that the single-observation approach requires (in some sense) fitting several surrogate statistics (e.g.~first and second moment), which may follow some very complex relationships that are learning-theoretically difficult to fit.
We give both theoretical and empirical (simulation) examples. 
We often use the variance as an example due to its simplicity, but we stress that our results are fully general, encompassing the UCB, MINVAR, and countless other properties elicitable via multi-observation losses.

\paragraph{Applications.}
Aside from augmenting the literature on fundamental properties of ERM, our results have applications to robust engineering design, uncertainty quantification, and finance.
In the design of an airfoil, building, truss, etc., engineers seek designs which minimize some objective (drag, cost, weight, displacement), but which are robust to changes in the environment or to manufacturing defects~\citep{beyer2007robust,royset2016set-based}.
To understand the quality of a design, expensive computer simulations are performed, which often have a stochastic component, and the goal is to minimize some \emph{risk measure} of the objective, such as the 95\% quantile of the drag for an airfoil~\citep{jouhaud2007surrogate-model}.
Our algorithms could increase the statistical efficiency of the popular practice of \emph{surrogate optimization}, where a hypothesis $f(x)$ is fit to the risk measure, for design parameters $x$, and then $f$ is minimized directly~\citep{shahbaz2016surrogate-based}.
We show that two popular risk measures for robust design, the upper confidence bound~\citep{doltsinis2004robust} and MINVAR~\citep{ong2006max-min} are easily fit with multi-observation losses.

In finance, risk measures are used both in decision-making and to regulate banks and other financial institutions~\citep{artzner1999coherent,follmer2004stochastic}.
Typically, banks must report estimates of their risk to authorities alongside their 10-day monetary losses~\citep{campbell2005review}, and a statistical test is performed to judge the accuracy of these risk estimates~\citep{acerbi2014back-testing}.
To perform this test, however, one essentially needs a loss function which is statistically consistent with the risk measure, i.e., which ``elicits'' it~\citep{emmer2013what,gneiting2011making,fissler2015expected}; this has played a major role in academic responses to the recent Basel 3.5 risk regulation proposal~\citep{carver2014back-testing,embrechts2014academic}.
Previous work~\citep{casalaina-martin2017multi-observation} has shown that multi-observation loss functions broaden the class of elicitable risk measures, but of course a bank cannot report ``i.i.d.''\ monetary losses for the same day.
Our results could imply that using multiple monetary losses from nearby days would suffice.

\section{Background: ERM and Elicitation}
\label{sec:background}

Classically, in supervised learning an algorithm is presented with an i.i.d.~sample of $n$ labeled points $(X_1, Y_1), \ldots, (X_n, Y_n)$ with the objective of selecting some action $f \in \F$ that obtains low expected loss $\E_{(X, Y) \sim \D \times \D_X} \left[ \loss_f(X, Y) \right]$, or risk, with respect to some loss function $\loss$. 
Here, we use the notation $\loss_f(X, Y) = \loss(f(X), Y)$, so that each $f$ is a function mapping from $\X$ to $\reals$, and we denote by $f^* \in \F$ the risk minimizer over $\F$. 
We also always assume that the loss $\loss$ is $L$-Lipschitz in its first argument. 
ERM, which returns any $\hat{f} \in \F$ that minimizes the empirical risk $\frac{1}{n} \sum_{i=1}^n \loss_f(X, Y)$, is a natural choice for solving this problem. The performance of ERM is now known to be tightly characterized by the notion of Rademacher complexity. 

\begin{definition}[Rademacher complexity]
Let $\G$ be a class of functions mapping from a space $\Z$ to $\reals$, and let $Z_1, \ldots, Z_n$ be an i.i.d.~sample from distribution $P$ over $\Z$. 
Let $\epsilon_1, \ldots, \epsilon_n$ be independent Rademacher random variables (distributed uniformly on $\{-1, 1\}$). The Rademacher complexity of $\G$ (with respect to $P$) is
\begin{align*}
  \rad_n(\G) = \E \left[ \E_{\epsilon_1, \ldots, \epsilon_n} \Biggl[ \sup_{g \in \G} \frac{1}{n} \sum_{i=1}^n \epsilon_i g(Z_i) \Biggr] \right] .
\end{align*}
\end{definition}
In the above, we may for instance take the space $\Z = \X \times \Y$ and class $\G = \{\loss_f : f \in \F\}$.

The following uniform convergence result is well-known; a proof appears in \S~\ref{app:background}
for completeness.
\begin{lemma} \label{lemma:rademacher}
Let $(X_1, Y_1), \ldots, (X_n, Y_n)$ be independent samples from distribution $P$ and assume, for all $f \in \F$, that $\loss(f(X), Y) \in [0, B]$ almost surely. Further assume that the loss $\loss$ is $L$-Lipschitz in its first argument. 
With probability at least $1 - \delta$,
\begin{align*}
\sup_{f \in\F} \left\{ \E [ \loss(f(X),Y) ] - \frac{1}{n} \sum_{i=1}^n \loss(f(X_i),Y_i) \right\} 
\leq 2 L \rad_n(\F) + B \sqrt{\frac{\log(1/\delta)}{2 n}} .
\end{align*}
\end{lemma}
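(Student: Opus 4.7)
The plan is to follow the standard three-step route: concentration via bounded differences, symmetrization with a ghost sample, and contraction to strip off the loss. Throughout, write $S_n = ((X_1,Y_1),\ldots,(X_n,Y_n))$ and let
\[
\Phi(S_n) \;=\; \sup_{f \in \F} \Bigl\{ \E[\loss(f(X),Y)] - \frac{1}{n}\sum_{i=1}^n \loss(f(X_i),Y_i)\Bigr\}.
\]
Our job is to control $\Phi(S_n)$ with high probability.

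First, I would apply McDiarmid's bounded-differences inequality to $\Phi$. Because $\loss(f(x),y) \in [0,B]$ almost surely for every $f \in \F$, replacing a single coordinate $(X_i,Y_i)$ changes the empirical average by at most $B/n$ and hence shifts $\Phi$ by at most $B/n$ (the supremum is $1$-Lipschitz under pointwise changes). McDiarmid then gives, with probability at least $1-\delta$, $\Phi(S_n) \le \E[\Phi(S_n)] + B\sqrt{\log(1/\delta)/(2n)}$, which already accounts for the second term in the stated bound.

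Second, I would bound $\E[\Phi(S_n)]$ by the standard symmetrization argument. Introducing an independent ghost sample $S_n' = ((X_1',Y_1'),\ldots,(X_n',Y_n'))$, replacing the population expectation inside the sup by the ghost-sample empirical average (using Jensen to pull it out through the sup), and then inserting i.i.d.\ Rademacher signs $\epsilon_i$ (which preserves the joint distribution of the signed differences $\loss_f(X_i,Y_i)-\loss_f(X_i',Y_i')$), yields
\[
\E[\Phi(S_n)] \;\le\; 2\,\E\Bigl[\,\sup_{f\in\F} \tfrac{1}{n}\sum_{i=1}^n \epsilon_i\, \loss(f(X_i),Y_i)\Bigr] \;=\; 2\,\rad_n(\loss\circ\F),
\]
where $\loss\circ\F = \{(x,y)\mapsto \loss(f(x),y) : f \in \F\}$.

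Finally, I would apply the Ledoux--Talagrand contraction inequality to replace $\rad_n(\loss\circ\F)$ by $L\,\rad_n(\F)$. Concretely, condition on $(Y_1,\ldots,Y_n)$ and on $(X_1,\ldots,X_n)$; then for each $i$ the map $u\mapsto\loss(u,Y_i)$ is $L$-Lipschitz by hypothesis, so contraction applied coordinatewise in the Rademacher sum over the scalars $f(X_i)$ yields $\rad_n(\loss\circ\F)\le L\,\rad_n(\F)$. Taking expectations back over the $Y_i$'s and $X_i$'s and combining with the previous two steps produces the claimed inequality. The only subtlety, and the place to be careful, is the contraction step: one must use the scalar form (the $Y_i$'s play the role of fixed side information after conditioning), which is why the Lipschitz hypothesis is stated for the first argument only. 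All other steps are routine.
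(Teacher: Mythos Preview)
Your proposal is correct and follows essentially the same three-step route as the paper: bounded differences (McDiarmid) for concentration, ghost-sample symmetrization with Jensen to bound the expectation by $2\rad_n(\loss\circ\F)$, and a Lipschitz contraction step to pass to $L\,\rad_n(\F)$. The only cosmetic difference is that the paper cites Theorem~7 of Meir--Zhang (2003) for the contraction inequality rather than Ledoux--Talagrand directly, but the content is the same.
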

In particular, the upper bound holds for the empirical risk minimizer $\hat{f}$. A straightforward argument then leads to a high probability bound on the excess risk of $\hat{f}$ (the risk of $\hat{f}$ minus the risk of $f^*$).

\paragraph{ERM for multi-observation losses.}
To apply the above analysis to multi-observation loss functions, at first glance we would need samples of the form $(X,Y_1,\ldots,Y_m)$, with each $Y_i$ drawn i.i.d.\ from the conditional distribution $\D_X$.
In reality, however, we will likely have samples of the form $(X,Y)$ as above, and thus we cannot apply the traditional theory directly.
Nevertheless, there is hope to recover the kinds of guarantees of traditional ERM if nearby $x$ values have similar conditional distributions on $\Y$ (the domain of $Y$).
We will formalize this in \S~\ref{sec:risk-bounds} with a Lipschitz assumption on the total variation distance of the conditional distributions (see~\eqref{eqn:tv-lipschitz}).

\paragraph{Elicitation and ERM.}
How does the choice of loss function affect the resulting hypothesis $\hat{f}$ chosen by ERM?
One way to understand this relationship is to ask what statistic a loss function \emph{elicits}, meaning what minimizes the loss if given only a single $x$ value and a distribution over the $y$ values; specifically, what is $\argmin_r \E \loss(r,Y)$ as a function of the distribution of $Y$?
For example, it is well-known that squared loss elicits the mean $\E[Y]$ in the sense that $\E[Y] = \argmin_{r} \E (r - Y)^2$.
It can easily be shown that the behavior of ERM can be completely characterized by what statistic the loss elicits, provided that conditional statistic as a function of $x$ is in the function class $\F$~\citep{frongillo2016ec}.

Unfortunately, given a statistic, there do not always exist losses which elicit it; one such example is the variance~\citep{lambert2008eliciting}.
As a result, fitting a model to the conditional variances $f(x) \approx \Var[Y|X\tinyeq x]$ directly using ERM is impossible, and instead one would typically fit models to the first two moments $(\E[Y|X\tinyeq x],\E[Y^2|X\tinyeq x])$, from which the variance can be obtained.
This leads to the concept of \emph{elicitation complexity}~\citep{lambert2008eliciting,frongillo2015elicitation-2}: how many elicitable quantities does one need to obtain the desired statistic?
From previous work~\citep{casalaina-martin2017multi-observation}, we now know that allowing multiple observations can vastly reduce this complexity: with 2 observations, the variance has complexity 1 (see \S~\ref{sec:experiments}), and the complexity of the 2-norm of the distribution of $Y$ drops from $|\Y|-1$ to 1 again with only 2 observations.
This improvement in complexity can have real advantages in practice, as we show in \S~\ref{sec:experiments}, and is the main motivation for studying multi-observation ERM.

\section{Algorithms}
\label{sec:algorithms}

The setting we consider is slightly unusual in terms of the number and kinds of samples used.
Our algorithms will often take a number of different $(x,y)$ pairs and ``merge'' them to create a data point of the form $(x,y_1,\dots,y_m)$, i.e.~multiple observations associated to a single $x$.
We refer to such a tuple as a \emph{metasample}.
In this paper, $n$ will always denote the number of metasamples constructed and used by a particular algorithm.
This differs from $\ns$, the total number of data points drawn by the algorithm.
We focus on empirical risk minimization over the metasamples,
\begin{align*}
\argmin_{f\in\F} \sum_{i=1}^n \loss_f(x_i, y_{i,1}, \dots, y_{i,m}) .
\end{align*}
Therefore, the algorithmic questions are (1) how to draw or choose samples, and (2) how to construct metasamples.
(In \S~\ref{sec:risk-bounds}, we will prove theoretical risk guarantees for some of these algorithms.)

\paragraph{Learning paradigms.}
Our algorithms will apply in two different paradigms.
In \emph{supervised learning}, the algorithms draw $\ns$ data points of the form $(x,y)$ i.i.d., construct $n$ metasamples, and run ERM.
The \emph{sample complexity} is $\ns$.
In \emph{pool-based active learning}, the algorithms draw $\ns$ unlabeled $x$ points.
They may query up to one label $y$ for each $x$, drawn independently from $\D_x$.
This results in a smaller number of labeled pairs $(x,y)$, from which the algorithms construct the metasamples.
For those of our algorithms with theoretical guarantees, the \emph{label complexity} will always be $nm$, because every label we draw is used in exactly one metasample.\footnote{This is not always true of other algorithms described below and tested in simulations, where the number of metasamples is larger compared to the number of labels because each $y$ may appear in multiple metasamples.}

One paradigm we do not consider in this paper is (fully) \emph{active learning}, where algorithms may repeatedly choose any $x$, query it to obtain an independent draw $y \sim \D_x$, and repeat.
Here, traditional algorithms and guarantees will generally carry over to the multi-observation setting: One can generally query as many i.i.d. observations from $\D_x$ as desired, so multi-observation losses present no additional difficulty.
However, we note that multi-observation regression is well-motivated for active learning settings in practice, such as crowdsourcing settings where multiple crowd workers can label the same data point.

\paragraph{Algorithmic approach.}
Our algorithms attempt to mimic an idealized ERM in the following way: (1) draw $\ns$ data points; (2) choose $n$ ``clumps'' of data points; (3) pick a ``representative'' $x$ for each clump along with $m$ labels from each clump.
For our ``unbiased'' algorithms with theoretical guarantees, it is important that each representative $x$ be an i.i.d.\ draw from $\D$, so we first draw $n$ data points, then draw some number of additional data points so as to produce a ``clump'' around each $x$.
This naturally leads to a label complexity of $nm$ and a sample complexity of some $\ns$ depending on the algorithm.
For our other ``biased'' algorithms, we may have ``overlapping'' clumps.
This can have the advantage of $n$ becoming quite large, which may help ERM perform better in practice.
However, the metasamples become correlated because they share labels, making theoretical results challenging.

\subsection{Unbiased Algorithms}

We first consider algorithms for which we will later be able to prove risk bounds, by ensuring that the $x$ values in the metasamples are i.i.d.\ samples from $\D$.
The Na\"ive algorithm, Algorithm \ref{alg:naive}, starts by drawing $n$ i.i.d.\ data points $X_1^*,\dots,X_n^*$ and using them as the basis for a metasample.
For each $X_i^*$, it then draws many new data points, so that with high probability, enough points come close enough to form a good metasample.
It then moves on to the next $X_{i+1}^*$.
\vspace{-15pt}

\noindent
\begin{minipage}[t]{0.46\textwidth}
  \vspace{0pt}  
    \begin{algorithm}[H]
      \DontPrintSemicolon
      \KwIn{$n,m,N \in \mathbb{N}$}
      Sample $n$ points $X^*_1, \ldots, X^*_n$ indep. from $\D$\;
      
      \For{$i = 1 \algto n$}{
        Sample $k := N/n$ points $X^{(i)}_1, \ldots, X^{(i)}_k$ independently from $\D$\;
        
         \For{$j \in [m]$}{
           Set $X_{i,j}$ to the $j\nth$ nearest neighbor of $X^*_i$ among $(X^{(i)}_j)_{j \in [K]}$, with ties broken arbitrarily\;

           Sample a label $\tilde{Y}_{i,j} \sim \D_{X_{i,j}}$\;
         }
       }
       $\hat{f} = \mathrm{ERM}_{\mathcal{F},\loss}\left((X^*_i, (\tilde{Y}_{i,1}, \ldots, \tilde{Y}_{i,m}))_{i \in [n]}\right)$\;
       
       \Return $\hat{f}$\;
      \caption{\label{alg:naive} \textsc{Na\"ive Sampling}}
    \end{algorithm}
\end{minipage}%
\hfill
\begin{minipage}[t]{0.53\textwidth}
  \vspace{0pt}
    \begin{algorithm}[H]
      \DontPrintSemicolon
      \KwIn{$n,m,\ns \in \mathbb{N},\epsilon \in (0,1)$}
      Sample $n$ points $X^*_1, \ldots, X^*_n$ independently from $\D$\;
      
        \For{$j=1 \algto m$}{
        	Sample $k := \ns / m$ points $X^{(j)}_1, \ldots, X^{(j)}_k$ independently from $\D$.\;

        	Find a maximum matching $M^{(j)}$ between $X^*_1, \ldots, X^*_n$ and $X^{(j)}_1, \ldots, X^{(j)}_k$ where $X^*_i$ and $X^{(j)}_{i'}$ are adjacent iff $|X^*_i-X^{(j)}_{i'}| \leq \epsilon$\;

        	If $|M^{(j)}| < n$, arbitrarily match the remaining $X^*_i$'s (ignoring distance constraints)\;
            
        	\For{$i = 1 \algto n$}{
        		Let $X_{i,j}$ denote the match of $X^*_i$ in $M^{(j)}$\;

                        Sample a label $\tilde{Y}_{i,j} \sim \D_{X_{i,j}}$\;
        	}
        }
      \Return $\hat{f} = \mathrm{ERM}_{\mathcal{F},\loss}\left((X^*_i, (\tilde{Y}_{i,1}, \ldots, \tilde{Y}_{i,m}))_{i \in [n]}\right)$\;
      \caption{\label{alg:improved} \textsc{Improved Sampling}}
    \end{algorithm}
\end{minipage}
\vspace{5pt}

In \S~\ref{app:sampling} we prove a lemma stating that if we set $\ns =  m n^{(d+3)/2} d^{d/2} \log \frac{2 m n}{\delta}$, then with probability at least $1-\delta$, most of the points $(X^*_j)_{j \in [n]}$ have their $m$ nearest neighbors all within a proximity of $\frac{1}{\sqrt{n}}$.  The proof partitions $\X$ into ``heavy'' and ``light'' cells according to the probability mass of each and uses Hoeffding's inequality to bound the number of $X^*_j$ points in light cells.

To improve on Na\"ive Sampling, Algorithm~\ref{alg:improved} iteratively draws batches of new samples and finds a globally good assignment to the original base sample.\footnote{Note that metasamples constructed conditional on $X$ still ensure that $Y$ is drawn independently from $\D_X$.}  The following lemma guarantees this algorithm's performance for $\ns = O\left(mn + m\left(\frac{\sqrt{d}}{\epsilon}\right)^d \log \frac{md^d}{\delta \epsilon^d}\right)$.

\newcounter{impsamplemcntr}
\setcounter{impsamplemcntr}{\value{theorem}}

\begin{restatable}[Improved Uniform Sampling Lemma]{lemma}{impsamplem}
\label{lem:improved-sampling}
Let $\epsilon \in (0,1)$, $d \in \N$  and let $x^*_1, \ldots, x^*_n, x_1, \ldots, x_N$ be drawn independently from $\D=\mathrm{Unif}([0,1]^d)$. If $N \geq C m \left(n + \left(\frac{\sqrt{d}}{\epsilon}\right)^d \left(\log \frac{m}{\delta}+d \log\frac{d}{\epsilon}\right)\right)$ for a universal constant $C$, then with probability at least $1 - \delta$, for each $j \in [n]$, there are at least $m$ points $x_{i_{j,1}}, \ldots, x_{i_{j,m}}$ satisfying $\|x^*_j - x_{i_{j,m}}\|_2 \leq \epsilon$, and all the $i_{1,1}, \ldots, i_{1,m}, \ldots, i_{n,1}, \ldots, i_{n,m} \in [N]$ are distinct.
\end{restatable}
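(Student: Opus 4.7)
The plan is to mimic the structure of Algorithm~\ref{alg:improved} by conceptually partitioning the $N$ auxiliary points into $m$ disjoint batches of size $k = N/m$, each a fresh i.i.d.\ sample from $\mathrm{Unif}([0,1]^d)$. For each batch separately, I would construct a matching that assigns every base point $x^*_j$ to a distinct batch point within distance $\epsilon$; concatenating the $m$ matchings then yields $m$ distinct neighbors of each $x^*_j$, with distinctness across batches being automatic since the batches use disjoint index sets.

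For a single batch, first partition $[0,1]^d$ into axis-aligned cubes of side length $\epsilon/\sqrt{d}$, so every cube has diameter $\leq \epsilon$ and there are $T \leq (2\sqrt{d}/\epsilon)^d$ cubes (assuming $\sqrt{d}/\epsilon \geq 1$; otherwise one cube suffices and the statement is trivial). Each cube $C$ has mass $p_C = 1/T$ under the uniform distribution. Letting $a_C$ be the number of $x^*_j$'s in $C$ and $b_C$ the number of batch points in $C$, I would apply a Bernstein/Chernoff bound to show that, with probability at least $1 - \delta/(mT)$,
\begin{align*}
a_C \leq n p_C + O\!\left(\sqrt{n p_C \log(mT/\delta)} + \log(mT/\delta)\right),\\
b_C \geq k p_C - O\!\left(\sqrt{k p_C \log(mT/\delta)} + \log(mT/\delta)\right).
\end{align*}
Union bounding over the $T$ cubes and $m$ batches costs total failure probability $\leq \delta$. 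Since $\log(mT/\delta) = O(\log(m/\delta) + d\log(d/\epsilon))$, choosing $k \geq C_0(n + T \log(mT/\delta))$ for a large enough universal constant makes the lower bound on $b_C$ dominate the upper bound on $a_C$ in every cube of every batch; this gives $N = mk \geq C\,m\bigl(n + (\sqrt{d}/\epsilon)^d(\log(m/\delta) + d\log(d/\epsilon))\bigr)$, matching the statement.

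Once $b_C \geq a_C$ in every cube, within each batch I would greedily (or by Hall's theorem applied cube-by-cube) match each base point in $C$ to a distinct batch point in $C$; since every pair within a cube has distance $\leq \epsilon$, each matched edge is valid. Concatenating across the $m$ batches produces $m$ distinct indices $i_{j,1},\ldots,i_{j,m}$ per base point $x^*_j$ with $\|x^*_j - x_{i_{j,\ell}}\|_2 \leq \epsilon$, as required.

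The main obstacle is handling cubes whose expected counts are tiny ($np_C, kp_C \ll 1$): the sub-Gaussian term in Chernoff is uninformative there, so one must use the additive $\log(mT/\delta)$ tail (or equivalently a Poisson-tail / Bernstein bound), which is exactly what forces the $T\log(mT/\delta) = (\sqrt{d}/\epsilon)^d(\log(m/\delta) + d\log(d/\epsilon))$ additive term in the sample complexity. A secondary subtlety is simply verifying that the conceptual splitting into $m$ batches is legitimate; this is immediate because the $N$ auxiliary samples are i.i.d., so the split is distributionally identical to drawing $m$ independent batches of size $k$.
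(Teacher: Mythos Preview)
Your proposal is correct and shares the paper's high-level outline: reduce to $m=1$ by splitting the $N$ auxiliary points into $m$ disjoint batches, partition $[0,1]^d$ into $r=(\sqrt{d}/\epsilon)^d$ cubes of side $\epsilon/\sqrt{d}$, show each cube receives at least as many batch points as base points, and then match within cubes. Where you diverge is in the concentration step. The paper does not bound $a_C$ and $b_C$ separately; instead it Poissonizes, replacing $n$ by $\tilde n\sim\mathrm{Pois}(2n)$ and $k$ by $\tilde k\sim\mathrm{Pois}(k/2)$, so that the per-cube counts become independent Poissons and the event $\{T_i<S_i\}$ is handled in one shot by the Poisson race bound $\Pr[\mathrm{Pois}(\lambda)<\mathrm{Pois}(\mu)]\le e^{-(\sqrt{\lambda}-\sqrt{\mu})^2}$; de-Poissonization then uses monotonicity plus standard Poisson tails. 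Your direct Bernstein/Chernoff route is more elementary---no Poissonization machinery---and lands on the same sample complexity up to constants; the paper's race-bound route is a bit cleaner because comparing the two counts directly avoids the AM--GM balancing you need to reconcile the upper bound on $a_C$ with the lower bound on $b_C$, and it sidesteps any subtlety about boundary cubes not all having exactly mass $1/T$.
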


The proof appears in \S~\ref{app:sampling}, the main idea being as follows.  We partition $[0,1]^d$ into regions of diameter $\epsilon$ and, through a Poissonization argument, show that with sufficient probability, each region has at least $m$ $x_i$'s for every $x_j^*$.  \S~\ref{app:sampling} also provides an analogous lemma for nonuniform distributions, where we set  $\ns = O\left(m n^{(d+1)/2} d^{d/2} \log \frac{m(nd)^d}{\delta}\right)$ and $\epsilon=1/\sqrt{n}$.

\subsection{Biased Algorithms}
We now briefly consider algorithms that we feel are likely to perform well in practice and indeed do so in our simulations. 
These algorithms construct a larger number of metasamples by reusing labels, thereby giving them access to more information but making theoretical guarantees very difficult. For simplicity, both algorithms are specified for single dimensional $\X$, but they can be generalized to higher dimensions at the expense of computational complexity.
The first algorithm, Sliding Window (Algorithm \ref{alg:sliding}), simply iterates from left to right over the $x$-values on the real line and creates a metasample from each group of $m$ adjacent points.\footnote{A similar algorithm, not formalized here, partitions the points into disjoint sets of adjacent $m$-tuples, treating each as a metasample. We found this algorithm equally difficult to analyze and less performant in practice.}
The second, dubbed ``$\epsilon$-Nearby'' (Algorithm \ref{alg:nearby}), sets a fixed upper distance limit $\epsilon$ and constructs a metasample from all $m$-tuples of data points whose $x$-values lie in an interval of diameter $\epsilon$.
We demonstrate the performance of these algorithms in \S~\ref{sec:experiments}.

\noindent
\begin{minipage}[t]{0.47\textwidth}
  \vspace{0pt}
  \begin{algorithm}[H]
    \DontPrintSemicolon
    \KwIn{$n,m \in \mathbb{N}$}
    Sample $n$ pairs $(X_1,Y_1), \ldots, (X_n,Y_n)$\;
    
    Sort the pairs so that $X_1 \leq X_2 \ldots \leq X_n$\;
   			
    \For{$i = 1 \algto n-m+1$}{
      Set $\tilde{X}_i \leftarrow \frac{1}{m}\sum_{j=1}^m X_{i+j-1}$\;

      \For{$j = 1 \algto m$} {
        Set $\tilde{Y}_{i,j} \leftarrow Y_{i+j-1}$\;
      }
    }
    $\hat{f} = \mathrm{ERM}_{\mathcal{F},\loss}((\tilde{X}_i, (\tilde{Y}_{i,1}, \ldots, \tilde{Y}_{i,m}))_{i=1}^{n-m+1})$\;

    \Return $\hat{f}$\;
    \caption{\label{alg:sliding} \textsc{Sliding Window}}
  \end{algorithm}
\end{minipage}%
\hfill
\begin{minipage}[t]{0.52\textwidth}
  \vspace{0pt}
  \begin{algorithm}[H]
    \DontPrintSemicolon
    \KwIn{$n,m \in \mathbb{N}, \epsilon \in (0, 1)$}
    Sample $n$ pairs $(X_1,Y_1), \ldots, (X_n,Y_n)$\;
    
    Sort the pairs so that $X_1 \leq X_2 \ldots \leq X_n$\;
    
    Set $t \leftarrow 1$\;
    
    \For{$i_1 = 1 \algto n-m+1$}{
      Set $k \gets \max \{ j \mid i_1 < j \leq n \wedge X_j - X_{i_1} \leq \epsilon\}$\;
      \For{$\{i_2,\ldots,i_m\} \subseteq \{i_1+1,\ldots,k\}$}{
        Set $\tilde{X}_t \leftarrow \frac{1}{m}\sum_{j=1}^m X_{i_j}$\;
        
        Set $\tilde{Y}_{t,j} \leftarrow Y_{i_j}$\;
        
        \lFor{$j = 1 \algto m$}{
          Set $t \leftarrow t+1$\;
        }
      }
    }
    \Return $\hat{f} = \mathrm{ERM}_{\mathcal{F},\loss}((\tilde{X}_i, (\tilde{Y}_{i,1}, \ldots, \tilde{Y}_{i,m}))_{i \in [t]})$\;
    \caption{\label{alg:nearby} \textsc{$\epsilon$-Nearby}}
  \end{algorithm}
\end{minipage}

\section{Risk bounds}
\label{sec:risk-bounds}

Let $\D$ be an arbitrary probability distribution over $\X \subset \reals^d$, and, for each $x \in \X$, let $\D_x$ be the conditional distribution over $\Y$ given $x$. 
For simplicity, we take $\X = [0, 1]^d$. 
In this section, we assume that loss values lie in $[0, B]$.

As discussed above, to make headway we will need to relate the conditional distributions $D_x$ for nearby $x$; formally, we make a Lipschitz assumption on their total variation distance,
\vspace{-1mm}
\begin{align}
D_{TV}(\D_x, \D_{x'}) \leq K \|x - x'\|_2 . \tag{A1} \label{eqn:tv-lipschitz}
\end{align}
Intuitively, this means that samples from nearby conditional distributions are almost interchangeable.

\subsection{Excess risk bounds for general situations}

Consider an ideal setting where we are given an i.i.d.~sample of $n$ points $X^*_1, \ldots, X^*_n$ and, for each $i \in [n]$, we have an i.i.d.~sample of $m$ points $Y_{i,1}, \ldots, Y_{i,m}$ drawn from $\D_{X^*_i}$.
Here, existing analyses of ERM still go through, modulo the label for each $X^*_i$ now being $\psi(Y_{i,1}, \ldots, Y_{i,m})$ for some known function $\psi$.
Our analysis relates the performance of our algorithms, which must construct their own ``noisy'' metasamples from imperfect data, to this ideal.
The analysis readily provides results under both the standard notion of sample complexity from supervised learning and the (much smaller) label complexity from pool-based active learning.

A key idea in the analysis is to view each metasample as being drawn in this idealized fashion (i.e.~each $Y_{i,j} \sim \D_{X^*_i}$), but with some chance of corruption.
We show this is possible by viewing any nearby distribution $\D_{X'}$ as a mixture of $\D_{X^*_i}$ with an arbitrary corruption.
We can then analyze ERM on a set of metasamples, most of which are ideal, but some of which are corrupted.

\begin{theorem}[Excess risk with corrupted samples] \label{thm:bound-imperfect-sampling} 
Assume that \eqref{eqn:tv-lipschitz} holds. Let \\$\ns = C m n^{(d+1)/2} d^{d/2} \log \frac{m(nd)^d}{\delta}$ for some universal constant $C$, and let $\tilde{f}$ be the hypothesis returned by Algorithm~\ref{alg:improved} on $(n, m, \ns, 1/\sqrt{n})$.
Then, for $n \geq 2 \log \frac{8}{\delta}$, with probability at least $1 - \delta$,
\begin{align*}
\E [ \loss_{\tilde{f}}(X, \mathbf{Y}) ] 
\leq \E [ \loss_{f^*}(X, \mathbf{Y}) ] 
       + 2 L \rad_{n}(\F) 
       + 2 B \left( 2 \sqrt{\log \tfrac{4}{\delta}} + m K \right)\frac{1}{\sqrt{n}} ~,
\end{align*}
where $X$ is drawn from $\D$, and, conditionally on $X$, $\mathbf{Y} = (Y_1, \ldots, Y_m)$ is drawn from $(\D_X)^m$.
\end{theorem}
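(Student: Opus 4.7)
The plan is to compare ERM on the algorithm's noisy metasamples against ERM on an idealized set of metasamples in which, for each anchor $X^*_i$, all $m$ labels are drawn from $\D_{X^*_i}$ directly. Since the $X^*_i$ are i.i.d.\ draws from $\D$ by construction of Algorithm~\ref{alg:improved}, the ideal metasamples are honest i.i.d.\ draws from the product distribution, so Lemma~\ref{lemma:rademacher} applies to them verbatim with $Z = (X, Y_1, \dots, Y_m)$ and $\G = \{\loss_f : f \in \F\}$. The core task is to control how much noisy-vs-ideal disagreement can shift the empirical risk.

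My first step is to invoke the nonuniform analogue of Lemma~\ref{lem:improved-sampling} (promised in \S\ref{app:sampling}) at confidence $1-\delta/4$ with $\epsilon = 1/\sqrt{n}$: this gives an event $\mathcal{E}_1$ on which every $X_{i,j}$ produced by the algorithm satisfies $\|X^*_i - X_{i,j}\|_2 \leq 1/\sqrt{n}$. Combining with assumption~\eqref{eqn:tv-lipschitz} yields $D_{TV}(\D_{X_{i,j}}, \D_{X^*_i}) \leq K/\sqrt{n}$. The standard maximal coupling then allows me to jointly define, on the same probability space, variables $\tilde{Y}_{i,j} \sim \D_{X_{i,j}}$ (the labels actually seen) and $Y^\circ_{i,j} \sim \D_{X^*_i}$ (the ideal labels) with $\Pr[\tilde{Y}_{i,j} \neq Y^\circ_{i,j} \mid \mathcal{E}_1] \leq K/\sqrt{n}$; moreover, conditional on the $X^*_i$ and $X_{i,j}$, the couplings are independent across $(i,j)$.

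Next I bound the number of \emph{corrupted metasamples}. Let $C_i = \ones\{\exists j : \tilde{Y}_{i,j} \neq Y^\circ_{i,j}\}$; by union bound $\Pr[C_i = 1 \mid \mathcal{E}_1, X^*, X] \leq mK/\sqrt{n}$, and the $C_i$ are conditionally independent. A Hoeffding bound applied to $C := \sum_i C_i$ gives, with probability at least $1 - \delta/4$ (event $\mathcal{E}_2$),
\begin{align*}
 C/n \leq mK/\sqrt{n} + \sqrt{\log(4/\delta)/(2n)} .
\end{align*}
The requirement $n \geq 2\log(8/\delta)$ will be what I need to ensure this Hoeffding deviation is itself $\leq 1$ (or to invoke a multiplicative Chernoff variant cleanly). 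Since $\loss \in [0,B]$, replacing one ideal metasample by its noisy version changes the empirical loss of any fixed $f$ by at most $B/n$; thus $\sup_f |\widehat{\risk}^\circ(f) - \widehat{\risk}(f)| \leq BC/n$, where $\widehat{\risk}^\circ$ and $\widehat{\risk}$ denote empirical risk on ideal and noisy metasamples, respectively.

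Finally I assemble the excess-risk decomposition. Apply Lemma~\ref{lemma:rademacher} to the ideal metasamples at level $\delta/4$ (event $\mathcal{E}_3$) to bound $\sup_f \{\risk(f) - \widehat{\risk}^\circ(f)\}$, and apply Hoeffding to $f^*$ on the ideal metasamples at level $\delta/4$ (event $\mathcal{E}_4$) to bound $\widehat{\risk}^\circ(f^*) - \risk(f^*)$. Then write
\begin{align*}
 \risk(\tilde f) - \risk(f^*)
 &= \bigl[\risk(\tilde f) - \widehat{\risk}^\circ(\tilde f)\bigr] + \bigl[\widehat{\risk}^\circ(\tilde f) - \widehat{\risk}(\tilde f)\bigr] \\
 &\quad + \bigl[\widehat{\risk}(\tilde f) - \widehat{\risk}(f^*)\bigr] + \bigl[\widehat{\risk}(f^*) - \widehat{\risk}^\circ(f^*)\bigr] + \bigl[\widehat{\risk}^\circ(f^*) - \risk(f^*)\bigr] ,
\end{align*}
bound the third bracket by $0$ using that $\tilde f$ is the ERM on the noisy metasamples, bound the first and fifth brackets by $\mathcal{E}_3$ and $\mathcal{E}_4$, and bound the second and fourth by $BC/n$ using the coupling. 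A union bound over $\mathcal{E}_1,\dots,\mathcal{E}_4$ at level $\delta/4$ each gives total failure probability $\delta$, and collecting the $\sqrt{\log(4/\delta)/n}$ contributions and the $2BmK/\sqrt{n}$ contribution yields the stated bound (up to harmless constants absorbed into the $2$ in front). The main obstacle I expect is step-by-step bookkeeping of the coupling: ensuring that the independence of the corruption indicators is preserved \emph{after} conditioning on the sampling-lemma event, so that Hoeffding applies to $C$ and so that the ideal metasamples remain i.i.d.\ from $\D \otimes \D_X^m$ as required by Lemma~\ref{lemma:rademacher}.
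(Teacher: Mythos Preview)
Your overall strategy---couple the observed labels to ideal labels drawn from $\D_{X^*_i}$, count corrupted metasamples, and combine with the standard Rademacher/Hoeffding decomposition---is exactly the paper's, and your five-term excess-risk decomposition is equivalent to the paper's Lemma~\ref{lemma:union-argument}.

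There is one genuine gap. The nonuniform analogue of Lemma~\ref{lem:improved-sampling} (namely Lemma~\ref{lem:improved-nonuniform}) does \emph{not} furnish an event $\mathcal{E}_1$ on which \emph{every} $X_{i,j}$ lies within $1/\sqrt{n}$ of its anchor. For a general $\D$ it only guarantees this for anchors in a subset $\mathcal{J}\subset[n]$ of size at least $n-\sqrt{(n\log\tfrac{8}{\delta})/2}$; the anchors outside $\mathcal{J}$ are precisely the ones that Algorithm~\ref{alg:improved} matched ``arbitrarily (ignoring distance constraints),'' and for those you have no TV bound and hence no coupling. Your argument as written therefore undercounts corruptions. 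The fix is the one the paper uses: split $[n]$ into $\mathcal{J}$ and its complement, run your coupling-plus-Hoeffding argument only on $\mathcal{J}$, and declare every $i\notin\mathcal{J}$ corrupted outright, contributing an additional $B\,|[n]\setminus\mathcal{J}|/n\le B\sqrt{\log(8/\delta)/(2n)}$ to $t_2$. This extra term is what turns the single $\sqrt{\log(4/\delta)}$ your argument would produce into the coefficient $2\sqrt{\log(4/\delta)}$ in the stated bound.

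On your closing worry: the paper avoids any conditioning issue for Lemma~\ref{lemma:rademacher} and the Hoeffding bound on $f^*$ by defining the ideal labels $Y_{i,j}\sim\D_{X^*_i}$ \emph{unconditionally} as a separate theoretical device, so $\Pn$ is an honest i.i.d.\ empirical measure regardless of the matching. The coupling and the sampling-lemma event are used only to control $\sup_{f}|(\Pn-\cPn)\loss_f|$; there, one may take each Bernoulli corruption indicator with fixed success probability $K/\sqrt{n}$ (an upper bound on the actual TV distance for $i\in\mathcal{J}$), which makes the $Z_{i,j}$ independent of the $X$-variables and of each other, so Hoeffding applies cleanly.
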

\begin{Proof}[Proof Sketch]
First, we appeal to Algorithm~\ref{alg:improved} to obtain $n$ metasamples $\{(X_i, \tilde{Y}_{i,1}, \dots, \tilde{Y}_{i,m})\}_{i=1}^n$ with each $X_i$ an i.i.d.~draw from $\D$ and each $\tilde{Y}_{i,j}$ an independent draw from some $\D_{X_i'}$ with $\|X_i' - X_i\|_2 \leq \frac{1}{\sqrt{n}}$.
This holds except for $O(\sqrt{n})$ metasamples.

Now, from Assumption \eqref{eqn:tv-lipschitz} we have, for each $X_i$, $m$ independent samples $\tilde{Y}_{i,j}$ from distributions that are close to $\D_{X_i}$.
The key idea is to show that each metasample $i$'s labels can be viewed as coming from $\D_{X_i}^m$ with high probability and from an arbitrary distribution otherwise.
This argument is first made for each $\tilde{Y}_{i,j}$: We can view it as a sample from a mixture that puts high probability on $\D_{X_i}$ and small probability on some other distribution.
Under this view, with high probability, every $\tilde{Y}_{i,1}, \dots, \tilde{Y}_{i,m}$ comes from the $\D_{X_i}$ component of its mixture.
Of course, this fails to be true for some of the metasamples, which we show again number only $O(\sqrt{n})$ with high probability.

The final component is an analysis of \emph{ERM with corrupted samples}.
Consider (even in the classical setting) running ERM on a set of $n$ samples, of which $O(\sqrt{n})$ have been corrupted arbitrarily but the rest are drawn i.i.d.\ from the underlying distribution.
In this case, we show that standard generalization bounds continue to hold with an error loss of only $O(1/\sqrt{n})$.
\end{Proof}

Recall that $nm$ is the label complexity (pool-based active learning paradigm) and $\ns$ is the sample complexity (supervised learning paradigm).\footnote{The algorithm is the same in both paradigms, except for the timing of label draws: in supervised, labels are drawn together with each $X$, while in pool-based active learning, labels are only queried when $X$ is added to a metasample.}
To illustrate exactly how our results translate, let us adopt the parametric setting where the Rademacher complexity term $\rad_n(\F)$ decays at the rate of $O(1/\sqrt{n})$ with $n$ (meta)samples. Because the sample complexity is $\ns =  C m n^{(d+1)/2} d^{d/2} \log \frac{m(nd)^d}{\delta}$, the excess risk decays as the rate $\tilde{O}(1/N^{1/(d+1)})$, where the $\tilde{O}$ notation omits log factors.
Similarly, in pool-based active learning, we can write $n' = nm$ for the label complexity and get excess risk decaying at a rate $O(\sqrt{1/n'})$.
(We fix $m$ here as it is inherent to the loss function.)

\subsection{Faster rates under strong convexity and the uniform distribution}

Let $\F$ be a class of linear predictors, so that it can be identified with a set $\W \subset \reals^d$. For $w \in \W$, and fixed $(x, \mathbf{y}) \in \X \times \Y^m$, we assume that the loss has the generalized linear form
$\loss \colon w \mapsto c(\langle w, \phi(x) \rangle, \mathbf{y})$,
for some functions $c$ and $\phi$. 
The risk functional $R$ is then
\begin{align*}
R \colon w \mapsto \E \left[ c \bigl( \langle w, \phi(X) \rangle, \mathbf{Y} \bigr) \right] ,  
\end{align*}
where $X \sim \D$ and, conditionally on $X$, $\mathbf{Y} = (Y_1, \ldots, Y_m) \sim \left( \D_X \right)^m$.

\begin{theorem} \label{thm:bound-faster} 
Assume that \eqref{eqn:tv-lipschitz} holds. 
Let $\F$ be a class of linear functionals as above, with the loss taking the generalized linear form. Suppose that $\|\phi(x)\| \leq B$ (the same $B$ as for the upper bound on the loss). 
Let $\varepsilon = (m K n)^{-1}$ and $\ns =C m \left(n + (\frac{\sqrt{d}}{\epsilon})^d \log \frac{3md^d}{\delta\epsilon^d}\right)$ for a universal constant $C$, and let $\tilde{f}$ be the hypothesis returned by Algorithm~\ref{alg:improved} on $(n,m,\ns,\epsilon)$. 
If $\D$ is the uniform distribution over $[0, 1]$ and the risk functional $R$ is $\sigma$-strongly convex, then, for any $\delta \leq 3 e^{-4}$ and $n \geq 2 \log \frac{8}{\delta}$, 
with probability at least $1 - \delta$
\begin{align*}
\E [ \loss_{\tilde{f}}(X, \mathbf{Y}) ] 
\leq \E [ \loss_{f^*}(X, \mathbf{Y}) ] 
       + \frac{3 B \log \frac{3}{\delta}}{n} 
       + \frac{8 L^2 B^2 \left( 32 + \log \frac{3}{\delta} \right)}{\sigma n}
\end{align*}
where $X$ is drawn from $\D$, and, conditionally on $X$, $\mathbf{Y} = (Y_1, \ldots, Y_m)$ is drawn from $(\D_X)^m$.
\end{theorem}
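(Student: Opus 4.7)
The high-level plan is to combine the sampling guarantee of Algorithm~\ref{alg:improved} (via Lemma~\ref{lem:improved-sampling} with the very small radius $\varepsilon=(mKn)^{-1}$) with a classical fast-rate ERM analysis for strongly convex risk, handling a small number of ``corrupted'' metasamples as a lower-order additive error. Because $\D$ is uniform on $[0,1]$ and $d=1$, the sample-complexity bound in Lemma~\ref{lem:improved-sampling} is $\ns=O(m(n+\varepsilon^{-1}\log\frac{m}{\delta\varepsilon}))$, which matches the hypothesis of the theorem. So first I would invoke Lemma~\ref{lem:improved-sampling} to obtain, with probability $1-\delta/3$, $n$ metasamples $(X_i^*,\tilde Y_{i,1},\dots,\tilde Y_{i,m})$ with $X_i^*\sim\D$ i.i.d.\ and each $\tilde Y_{i,j}\sim\D_{X_{i,j}}$ with $\|X_{i,j}-X_i^*\|_2\leq\varepsilon$.

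Next, exactly as in the proof sketch of Theorem~\ref{thm:bound-imperfect-sampling}, I would use \eqref{eqn:tv-lipschitz} to write each $\D_{X_{i,j}}$ as a mixture $(1-K\varepsilon)\D_{X_i^*}+K\varepsilon\,\Q_{i,j}$ for some $\Q_{i,j}$. Call a metasample \emph{clean} if all $m$ of its labels are drawn from the $\D_{X_i^*}$ component; otherwise call it \emph{corrupted}. By a union bound, each metasample is clean independently with probability at least $1-mK\varepsilon=1-1/n$. Then the number $k$ of corrupted metasamples is stochastically dominated by $\mathrm{Binomial}(n,1/n)$, so a Chernoff bound gives $k\leq C\log(3/\delta)$ with probability at least $1-\delta/3$. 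This is where the choice $\varepsilon=(mKn)^{-1}$ is essential: it makes the expected number of corruptions $O(1)$, so that their contribution to the empirical risk is $O(\log(1/\delta)/n)$ rather than $O(\sqrt{1/n})$.

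The main technical step is the fast-rate analysis on the clean metasamples. Conditioning on being clean, a clean metasample $(X_i^*,\tilde Y_{i,1},\dots,\tilde Y_{i,m})$ has $X_i^*\sim\D$ and, conditionally on $X_i^*$, $(\tilde Y_{i,1},\dots,\tilde Y_{i,m})\sim(\D_{X_i^*})^m$ -- exactly the idealized distribution whose expectation defines $R$. Since $R$ is $\sigma$-strongly convex and the loss $w\mapsto c(\langle w,\phi(x)\rangle,\mathbf{y})$ is convex and $LB$-Lipschitz in $w$ (by the chain rule and $\|\phi(x)\|\leq B$), standard results for ERM with strongly convex risk (e.g.\ Sridharan--Shalev-Shwartz--Srebro style, via localized Rademacher or direct strong-convexity + Bernstein) yield, with probability at least $1-\delta/3$,
\begin{align*}
R(\hat w_{\mathrm{clean}})-R(w^*)\leq \frac{8L^2B^2(32+\log(3/\delta))}{\sigma n},
\end{align*}
where $\hat w_{\mathrm{clean}}$ is the ERM on the $n-k$ clean metasamples. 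I would verify that the small reduction from $n$ to $n-k=n(1-o(1))$ clean samples only changes constants under the condition $n\geq 2\log(8/\delta)$.

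The final step is to transfer this bound to the actual ERM $\tilde f$, which is run on all $n$ metasamples. Because each corrupted metasample perturbs the empirical risk by at most $B/n$ per sample, swapping between $\tilde f$ and $\hat w_{\mathrm{clean}}$ changes the empirical risk by at most $2Bk/n\leq 2BC\log(3/\delta)/n$; combined with the strong-convexity fast-rate bound above, this produces the $\frac{3B\log(3/\delta)}{n}$ term. A final union bound over the three $\delta/3$ events closes the argument. The main obstacle, I expect, is being careful about this last robustness-to-corruption step: the standard fast-rate proofs do not in a black-box way tolerate arbitrary corruption of $k$ samples, so I would likely need to argue directly via strong convexity of $R$, namely $R(\tilde w)-R(w^*)\geq\frac\sigma2\|\tilde w-w^*\|^2$, together with uniform concentration of the ``clean'' part of the empirical process and a crude Lipschitz control of the at most $k$ corrupted terms, to obtain the claimed additive $O(B\log(1/\delta)/n)$ overhead without degrading the $1/(\sigma n)$ rate.
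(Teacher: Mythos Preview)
Your proposal has all the right ingredients and essentially matches the paper: invoke Lemma~\ref{lem:improved-sampling} for the uniform case (failure probability $\delta/3$), use the mixture/coupling view of \eqref{eqn:tv-lipschitz} so that each metasample is corrupted with probability at most $mK\varepsilon=1/n$, apply a multiplicative Chernoff bound to get at most $\tfrac32\log(3/\delta)$ corrupted metasamples (another $\delta/3$), and use the Sridharan--Shalev-Shwartz--Srebro fast-rate bound under $\sigma$-strong convexity for the remaining $\delta/3$.

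The one place where the paper differs from your write-up is precisely the ``main obstacle'' you flag at the end. Rather than analyzing an ERM $\hat w_{\mathrm{clean}}$ on the random $n-k$ clean metasamples and then transferring to $\tilde f$, the paper couples the observed labels to a \emph{full} theoretical clean sample $(X_i^*,\mathbf Y_i)_{i\in[n]}$ with $\mathbf Y_i\sim(\D_{X_i^*})^m$, lets $\hat f$ be ERM on that clean sample, and uses the decomposition (their Lemma~\ref{lemma:union-argument-II})
\[
\Pr\!\bigl(\Prob\loss_{\tilde f}>\Prob\loss_{f^*}+t_1+2t_2\bigr)
\le \Pr\!\Bigl(\sup_{f}\bigl\{\Prob(\loss_f-\loss_{f^*})-\Pn(\loss_f-\loss_{\hat f})\bigr\}>t_1\Bigr)
+\Pr\!\Bigl(\sup_f|(\Pn-\cPn)\loss_f|>t_2\Bigr).
\]
The first probability is controlled directly by Sridharan et al.'s \emph{uniform} result (which bounds $\Prob(\loss_f-\loss_{f^*})$ against $\Pn(\loss_f-\loss_{\hat f})$ for all $f$, not just for $\hat f$), giving $t_1=\tfrac{8L^2B^2(32+\log(3/\delta))}{\sigma n}$. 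The second probability is handled crudely: since clean and observed empirical risks differ on at most $\tfrac32\log(3/\delta)$ metasamples, $\sup_f|(\Pn-\cPn)\loss_f|\le \tfrac{\tfrac32 B\log(3/\delta)}{n}=:t_2$. This sidesteps your worry entirely: you never need a separate ERM on the clean subset, nor a robustness argument combining strong convexity with a Lipschitz control of corrupted terms. Your route would also work, but the paper's coupling-to-a-full-clean-sample device is cleaner and avoids the conditioning issues you anticipate.
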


This result implies that the excess risk decays at the rate $O(1/n)$, where $nm$ is the label complexity.  Since the sample complexity is $\ns = C m \left(n + (mKn\sqrt{d})^d \log \frac{3m(mKnd)^d}{\delta}\right)$, this implies that the excess risk decay rate in terms of $\ns$ is $\tilde{O}(1/N^{1/d})$.

\begin{example}[Strongly convex risk functional] \label{ex:strong-convexity}
Take the example of the variance with $\X \subset \reals^d$. 
For fixed $(x, y_1, y_2)$, the loss and risk functional are, respectively,
\begin{equation*}
\loss \colon w \mapsto \left( \langle w, x \rangle - \frac{1}{2} (y_1 - y_2)^2 \right)^2~, \qquad R \colon w \mapsto \E_{\substack{X \sim \mathcal{D} \\ Y_1 \sim \mathcal{D}_X \\ Y_2 \sim \mathcal{D}_X}} \left[ \left( \langle w, X \rangle - \frac{1}{2} (Y_1 - Y_2)^2 \right)^2 \right]~.
\end{equation*}
Then $\nabla^2_w R(w) = 2 \E [ X X^T ]$, and so if $\E [ X X^T ] \succeq \sigma I$, then $(2 \sigma)$-strong convexity holds.
In the special case of $d = 1$, provided that $X$ is non-trivial we clearly have strong convexity of the risk.
\end{example}

\subsection{Sample complexity and dimension}
\label{sec:dimension}

While our sample complexity results above apply for any dimension $d$ of $\X$, they scale exponentially in $d$.
In this work, we are motivated by $d = O(1)$ and show in the next section that the multi-observation approach can yield significant practical improvements in these settings.
Here, we briefly note that an exponential dependence on $d$ is information-theoretically necessary for \emph{any algorithm} to regress on such higher-order statistics, even, in particular, in the simple case of estimating the average $\Var(y \given x)$ over the distribution.
Therefore, we leave investigation of strategies for the higher-dimensional case (for example, active learning approaches) to future work.

For example, in \S~\ref{sec:lower} we show:
\newcounter{booldimlowercntr}
\setcounter{booldimlowercntr}{\value{theorem}}
\begin{restatable}{theorem}{booldimlower}
\label{thm:boolean-dimension-lower}
 If $\X$ is in the $d$-dimensional hypercube and the Lipschitz constant is $K = 1$, no algorithm for regression on variance of $y$ can have nontrivial accuracy with $o(2^{0.5d})$ samples.
\end{restatable}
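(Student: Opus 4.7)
The plan is to establish an information-theoretic Bayes-risk lower bound. I will construct a family of joint distributions on $(X,Y) \in [0,1]^d \times \reals$ indexed by bit strings $b \in \{0,1\}^N$, with $N = \lceil 2^{d/2} \rceil$, such that (i) each satisfies the Lipschitz assumption \eqref{eqn:tv-lipschitz} with $K = 1$, (ii) the true conditional variance $\Var(Y \given X = x)$ encodes the bits of $b$ at $N$ well-separated anchor points, and (iii) no learner using $o(N)$ samples can identify a constant fraction of the bits. Concretely, I would place $N$ anchor points $x_1, \ldots, x_N \in [0,1]^d$ at pairwise Euclidean distance at least $1$, taking any subset of the $2^d \geq N$ vertices of the cube, and pick two base distributions with variance gap $1$ and TV distance $1$, e.g.\ $\D_0 = \delta_0$ (Dirac at $0$, variance $0$) and $\D_1 = \tfrac{1}{2}(\delta_{-1} + \delta_1)$ (variance $1$), together with the midpoint $\D_\star = \tfrac{1}{2}(\D_0 + \D_1)$. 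For each $b \in \{0,1\}^N$, set the conditional distribution $\D_{x_i} = \D_{b_i}$ at each anchor, and extend $\D_x$ to all of $[0,1]^d$ by the mixture $\D_x = (1 - 2t)\D_{x_i} + 2t\, \D_\star$ within the ball of radius $1/2$ about $x_i$ (with $t = \|x - x_i\|_2$), and $\D_x = \D_\star$ outside every ball. The marginal on $X$ is taken uniform on the anchors, so that $\Var(Y \given X = x_i) = b_i$.

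\textbf{Lipschitz verification and Bayes-risk analysis.}
A routine calculation shows the construction is globally $1$-Lipschitz in TV distance: within a single ball one obtains $D_{TV}(\D_x, \D_{x'}) = |t - t'| \leq \|x - x'\|_2$, and across distinct balls $B_i, B_j$ the triangle inequality through $\D_\star$ yields $D_{TV}(\D_x, \D_{x'}) \leq (1 - 2t)/2 + (1 - 2t')/2 = 1 - t - t' \leq \|x - x'\|_2$, because $\|x_i - x_j\|_2 \geq 1$. Next, put a uniform prior on $b \in \{0,1\}^N$ and consider the Bayes risk of any learner observing $n$ i.i.d.\ samples $(X_k, Y_k)_{k=1}^n$ from the resulting joint distribution. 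Let $S = \{i : X_k = x_i \text{ for some } k\}$, so $|S| \leq n$. The key observation is that samples with $X_k \neq x_i$ are independent of $b_i$, so for each $i \notin S$ the posterior of $b_i$ given the data remains uniform on $\{0,1\}$. Consequently, any predictor $f$ incurs conditional squared error at least $\min_v \E_{b_i}(v - b_i)^2 = 1/4$ at a test point $X = x_i$ with $i \notin S$; since $\Pr(i \notin S) \geq 1 - n/N$, the Bayes (and hence minimax) expected squared error is at least $\tfrac{1}{4}(1 - n/N) = \Omega(1)$ whenever $n = o(N) = o(2^{d/2})$, precluding any vanishing error guarantee.

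\textbf{Main obstacle.}
I expect the most delicate step to be the Lipschitz verification for pairs of points lying in different balls, which is the reason for choosing ball radius $1/2$ and the balanced midpoint $\D_\star$ so that $D_{TV}(\D_b, \D_\star) = 1/2$ cancels exactly the $2t$ mixing-weight growth. The remainder of the argument is routine: the information-theoretic bound is essentially coupon-collector-style once posterior-independence across coordinates is identified, and ``nontrivial accuracy'' is naturally read as expected squared error $o(1)$, which is precisely what the $\Omega(1)$ Bayes-risk lower bound rules out.
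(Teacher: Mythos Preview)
Your argument is correct, but it follows a genuinely different route from the paper's. The paper places the uniform distribution on \emph{all} $2^d$ Boolean vertices, uses $\Y=\{0,1\}$, and reduces to a two-hypothesis testing problem: instance $A$ has $\Pr[y=1\mid x]=\tfrac12$ everywhere (variance $\tfrac14$), while each instance in family $\B$ has $\Pr[y=1\mid x]=\theta_x\in\{0,1\}$ (variance $0$). The key step is the birthday paradox: with $o(2^{0.5d})$ samples no $x$ repeats, and conditioned on no repeats the sample distributions under $A$ and a random $B$ are \emph{identical}, so the algorithm cannot distinguish average variance $\tfrac14$ from $0$. You instead place mass on only $N=\lceil 2^{d/2}\rceil$ anchors, encode one bit per anchor, and use a direct coupon-collector/Bayes-risk argument: with $n=o(N)$ samples most anchors are unvisited, so most bits remain posterior-uniform.

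Two remarks on what each approach buys. First, the paper's construction has \emph{constant} conditional variance in every instance, so it proves hardness even for the trivial hypothesis class of constant functions---a slightly sharper conclusion than your pointwise-MSE bound, since in your family the target function varies across anchors and the average variance concentrates near $\tfrac12$ under the prior. Second, your elaborate TV-Lipschitz interpolation through $\D_\star$ is more than you need: since your marginal on $X$ is supported only on the anchors and distinct anchors lie at Euclidean distance $\geq 1$, the bound $D_{TV}\leq 1\leq \|x_i-x_j\|_2$ is immediate on the support, which is all assumption~\eqref{eqn:tv-lipschitz} requires (the paper likewise works on the discrete $\{0,1\}^d$). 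Your interpolation is correct but can be dropped.
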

Intuitively, the obstacle is that a subexponential number of samples can (and, under e.g.~a uniform distribution, does) have all $x$ values separated from each other by a constant distance.
So any given region will only have one $(x,y)$ pair sampled, and information-theoretically, no knowledge can be gleaned about the variance in that region.
\S~\ref{sec:lower} also provides a similar lower bound for the case of the uniform distribution over $[0,1]^d$, where $K = d$.

\section{Comparison to Single-Observation Losses}

For completeness, we now compare multi-observation regression to the typical case of single-observation regression, where the loss depends on a single prediction and a single observation.
We first explore this comparison theoretically, and then give experimental results, both in the simple setting of fitting a parametric model to the variance $\Var[Y|X]$.
In our experiments, we also take the opportunity to compare the various algorithms presented in \S~\ref{sec:risk-bounds} and explore other statistics of interest besides variance.

\subsection{Lower bounds for single-observation losses}
\label{sec:stone-lower-bound}

Consider the example of the variance. 
One method for regressing the variance is the following ``two-estimator'' approach: use a single-observation estimator to regress the first conditional moment $\E [ Y \mid X ]$ and a separate single-observation estimator to regress the second conditional moment $\E [ Y^2 \mid X ]$; the variance can then be predicted in terms of the two learned hypotheses.

Intuitively, this approach can go very wrong in many settings.
Although the conditional variance might have a simple parametric form, we may not know a priori if these conditional moments have simple forms. Thus, we may either try to estimate them from classes that are too small, picking up a large approximation error that leads to poor estimators of the variance, or we may conservatively estimate them from a sufficiently rich class to match our Lipschitz assumption \eqref{eqn:tv-lipschitz}, which may lead to 
overfitting. 
Indeed, 
an existing minimax lower bound of \cite{stone1982optimal} for estimating the conditional mean $\E [ Y \mid X]$ when the latter is $1 / (2 \sqrt{d})$-Lipschitz with respect to $x$
(as per Assumption~\eqref{eqn:tv-lipschitz}) with range in $[0, 1]$ implies that \emph{the risk of estimating the variance (under squared loss) of any two-estimator approach $\hat{f}$,
\begin{align}
\E \left[ \left( \hat{f}(X) - \Var[Y \mid X] \right)^2 \right] , \label{eqn:1-step}
\end{align}
cannot decay at a rate faster than $n^{-2/(2 + d)}$.} 
Note that the estimation setting is only made easier by the fact that the Lipschitz constant decreases with $d$; this lower bound applies a fortiori for $K$-Lipschitz functions when $K$ does not decrease with the dimension. Lastly, since the bound is a minimax lower bound, it holds for all estimators, not just those that predict according to $K$-Lipschitz functions.

We now describe how the lower bound of \cite{stone1982optimal} implies the above result. 
As above, 
the goal is to minimize the $L_2(\D)$-norm of the conditional variance of $Y$, i.e., to find an estimator $\hat{f}$ for which the squared version thereof, \eqref{eqn:1-step}, 
is as small as possible.
In the two-estimator approach, $\hat{f}$ takes the form $\hat{f} = \hat{g} - \hat{h}$ for estimators $\hat{g}$ and $\hat{h}$ of $\E [ Y^2 \mid X ]$ and $\left( \E [ Y \mid X] \right)^2$ respectively. Then \eqref{eqn:1-step} is equal to 
\begin{align}
\E \left[ \left( \left( \hat{g}(X) - \E \left[ Y^2 \mid X \right] \right) + \left( \E \left[ Y \mid X \right] - \hat{h}(x) \right) \right)^2 \right] . \label{eqn:2-step}
\end{align}
The two-estimator approach needs to ensure that $L_2(\D)$ norm of $\hat{g}(X) - \E \left[ Y^2 \mid X \right]$ is small (and that the $L_2(\D)$ norm of the other term is small). Suppose that $Y \mid X = x$ is Bernoulli for any $x \in \X$. Then this latter necessary goal reduces to the familiar regression problem of minimizing
\begin{align*}
\E \left[ \left( \hat{g}(X) - \E [ Y \mid X] \right)^2 \right] .
\end{align*}

The aforementioned minimax lower bound will apply to the above estimation problem in the following setting (we leave the full details to \S~\ref{app:lipschitz-lower-bound}):  take $\X = [0, 1]^d$ and let $\D$ be the uniform distribution over $\X$. Suppose for all $x \in \X$ that the distribution $\D_x$ is a certain subclass of Bernoulli distributions
with $x \mapsto \E [ Y \mid X = x ]$ a $K$-Lipschitz function. Then for any estimator $\hat{g}$, there exists a law $Y \mid X$ satisfying the aforementioned assumptions such that
\begin{align*}
  \E \left[ ( \hat{g}(X) - \E [ Y \mid X ] )^2 \right] = \Omega \left( n^{-2 / (2 + d)} \right) ,
\end{align*}
and there is a matching upper bound, so that this is the minimax optimal rate of convergence. 

To see how this rate compares to our results, suppose that the variance is captured by a generalized linear form which lies within our model $\F$, so that $f^* \colon x \mapsto = \Var [ Y \mid X = x]$. Then a slight generalization of a standard exercise shows that the excess risk $\E [ \loss_{\tilde{f}}(X, \mathbf{Y}) ] - \E [ \loss_{f^*}(X, \mathbf{Y}) ]$ in Theorem~\ref{thm:bound-faster} takes the form \eqref{eqn:2-step} with $\hat{f}$ replaced by our direct, metasample-based estimator $\tilde{f}$. In the case of dimension $d=1$, the rate of Theorem~\ref{thm:bound-faster} is $n^{-1}$ and hence better than the rate of $n^{-2/3}$ of the two-estimator approach. Note that there is no contradiction with the minimax lower bound of \cite{stone1982optimal}, because we assume that the variance takes a parametric form; were the variance to be an arbitrary $K$-Lipschitz function, our rates would degrade.\footnote{We believe in this case that our rate for $d=1$ would also be $n^{-2/3}$, based on existing fast rates results for classes whose metric entropy grows as $\varepsilon^{-1}$ (the class of $K$-Lipschitz functions exhibits such growth).}

One might wonder if it is possible to beat the above lower bound by using active learning. Unfortunately, the same lower bound holds in a similar setting even if one is allowed to use active learning strategies~\cite[Theorem 1]{castro2006faster}.

In our experiments, we explore the performance of the two-estimator approach using ERM, which can be significantly worse than a direct multi-observation regression of the variance.

\subsection{Experiments}
\label{sec:experiments}

In our experiments we opted for synthetic data over real data.  Multi-observation loss functions help in learning higher order statistics about $Y | X\tinyeq x$; unfortunately with real data one generally does not know what the true value of those statistics are and thus has no objective way of comparing different algorithms.  By using synthetic data we can choose the underlying value of these statistics and evaluate algorithms by how closely they approximate it.

We consider three different statistics: variance, the upper confidence bound (UCB), and MINVAR.
The variance can be elicited, as observed in prior work~\citep{casalaina-martin2017multi-observation}, by the two-observation loss function $\loss(r,y_1,y_2) = \left(r-\frac{1}{2}(y_1-y_2)^2\right)^2$.
The UCB, often used in surrogate optimization for robust design in engineering, is defined by $\ucb_\lambda(Y) = \E[Y] + \lambda \sqrt{\Var[Y]}$, for some fixed $\lambda$.
We show in \S~\ref{app:ucb} that it can be elicited by a two-observation loss function under some restrictions on the distribution of $Y$.
MINVAR ($\rho_k$), the expected minimum of $k+1$ i.i.d. draws of the distribution, is straightforwardly elicitable via squared loss $\loss(r,y_1,\ldots,y_{k+1}) = \left(r-\min\{y_1,\ldots,y_{k+1}\}\right)^2$.

In all of our experiments we are trying to learn a statistic of $Y | X\tinyeq x$, where $X \sim \mathrm{Unif}(0,1)$.  For the variance, we tried different distributions $Y | X\tinyeq x$ of the form $f(x) + N(0,1)$.  We present here our results when $f(x)$ is either a sine wave or a line. $\Var(Y|X\tinyeq x) = 1$ in all cases.  For the $\ucb_\lambda(Y)$ experiments, we chose $\lambda = 8$ and $Y | X\tinyeq x \sim \Gamma(k(x), \theta(x))$ where $k(x)$ and $\theta(x)$ were chosen such that $\E[Y|X\tinyeq x] = 2 + \sin(4\pi x)$ and $\ucb_\lambda(Y|X\tinyeq x) = x+10$.  For the $\rho_k$ experiments we let $k = 4$ and $Y | X\tinyeq x$ be an exponential distribution with mean $5(x+2)$.  Consequently, $\rho_4(Y | X\tinyeq x) = x+2$.

As a baseline for the variance and $\ucb_\lambda$, we compared our algorithms to the strategy of learning functions for $E[Y|X\tinyeq x]$ and $E[Y^2|X\tinyeq x]$ via ERM and then combining those functions to estimate $\Var[Y|X\tinyeq x]$ or $\ucb_\lambda(Y|X\tinyeq x)$.  For $\rho_k$ there is no such obvious single observation ERM strategy.  For $\ucb_\lambda$ and $\rho_k$ we also compared our algorithms against the ``empirical approach'' that is given the power to draw multiple i.i.d. labels for each random $x$ it draws, compute the empirical statistic for that $x$, then fit a line to the results.  See \S~\ref{app:sims} for more details about our experiments.

\begin{figure}[h]
	\centering
		\includegraphics[width=.49\textwidth]{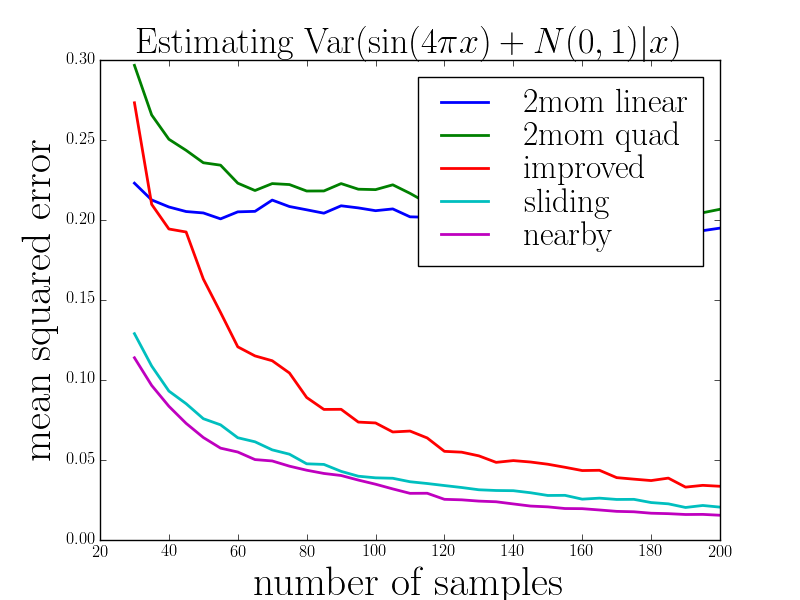}
		\includegraphics[width=.49\textwidth]{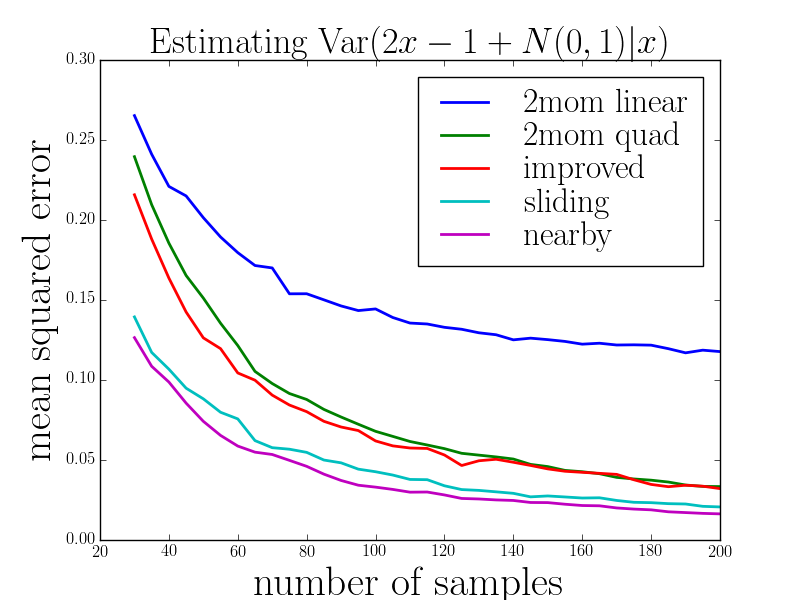}
		\includegraphics[width=.49\textwidth]{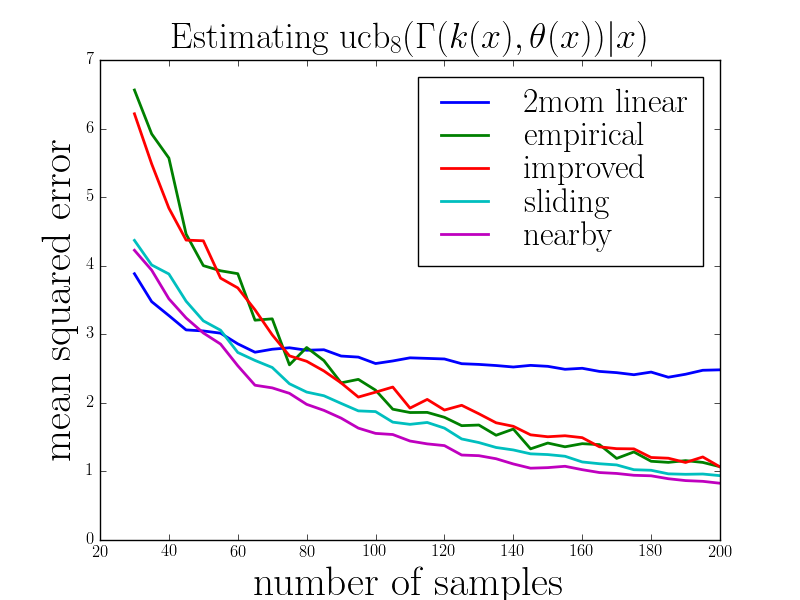}
		\includegraphics[width=.49\textwidth]{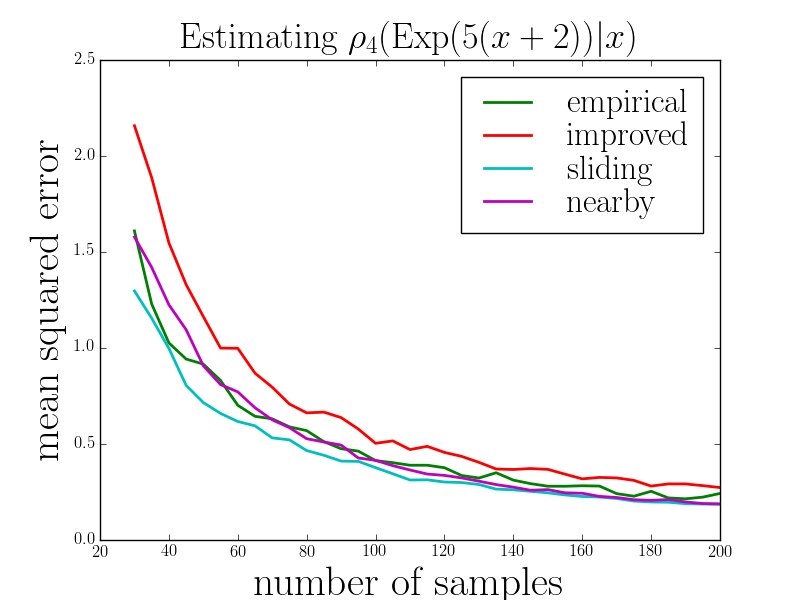}
	\caption{A comparison of ERM strategies.  Here ``2mom linear'' and ``2mom quad'' are the methods of fitting lines and quadratics respectively to the first two moments of $Y | X\tinyeq x$, ``improved'' is Algorithm~\ref{alg:improved}, ``sliding'' is Algorithm~\ref{alg:sliding}, ``nearby'' is Algorithm~\ref{alg:nearby} and ``empirical'' is the strategy of fitting a line to empirical estimates of the given statistic. Methods are evaluated by mean squared error to the true value of the statistic.}
	\label{fig:sims}
\end{figure}

Our results are depicted in Figure~\ref{fig:sims}.  Observe that in all of our experiments, Algorithms~\ref{alg:sliding} and~\ref{alg:nearby} performed the best.  Fitting lines to the moments never performed well, which is not surprising as in all cases at least one moment was non-linear.\footnote{We found that reducing the magnitude of the ``non-linear part'' led the two-moment approach to outperform our algorithms.  This occurred for variance and $\ucb_\lambda$ when $|f(x)|$ and $\lambda$ were sufficiently small, respectively.}  However, even when fitting quadratics to moments which are quadratics (in the case of the variance when $f(x) = 2x-1$) our two observation algorithms still outperformed the two moment approach.  This demonstrates that for the two observation approach to be beneficial it is only necessary that the statistic is simpler than the underlying distribution, not that the underlying distribution is from an entirely unknown class.  Our algorithms only show very slight improvement over the empirical approach for $\ucb_\lambda$ and $\rho_k$, but even being competitive is surprising given that the empirical approach has more power than our algorithms are afforded.

\section{Conclusion and Future Work}
\label{sec:conclusion}

ERM with multi-observation loss functions presents challenges and complications as compared to traditional ERM, but also interesting opportunities.
With initial theoretical guarantees in hand, one next step is to explore  ``risk'' or ``variance'' regression problems encountered in practice for which the multi-observation approach may be useful.
Active learning settings may be among the most fruitful, as they are well-suited to collecting multiple labels at or near the same feature.
This investigation will interact with \emph{elicitation} and the design of loss functions that are consistent for a desired property of the conditional distribution, such as the variance.
In particular, an open problem is to discover a loss function for the UCB property that does not require restrictions on the distribution of $Y$.

Another direction is to investigate the higher-dimensional setting further.
Our lower bounds show that, without additional assumptions, the number of samples required for regression on e.g.~the variance require sample complexity exponential in the dimension of $\X$.
One approach could be investigation into settings with low intrinsic dimension, embedded in some higher-dimensional space.
Another could be investigating active learning applications, which should allow sidestepping these lower bounds.

%\subsubsection*{Acknowledgments}

\bibliography{diss,mult-reg-refs}

\begin{thebibliography}{35}
\providecommand{\natexlab}[1]{#1}
\providecommand{\url}[1]{\texttt{#1}}
\expandafter\ifx\csname urlstyle\endcsname\relax
  \providecommand{\doi}[1]{doi: #1}\else
  \providecommand{\doi}{doi: \begingroup \urlstyle{rm}\Url}\fi

\bibitem[Acerbi and Szekely(2014)]{acerbi2014back-testing}
Carlo Acerbi and Balazs Szekely.
\newblock Back-testing expected shortfall.
\newblock \emph{Risk}, November 2014.

\bibitem[Artzner et~al.(1999)Artzner, Delbaen, Eber, and
  Heath]{artzner1999coherent}
Philippe Artzner, Freddy Delbaen, Jean-Marc Eber, and David Heath.
\newblock Coherent measures of risk.
\newblock \emph{Mathematical finance}, 9\penalty0 (3):\penalty0 203--228, 1999.
\newblock URL
  \url{http://onlinelibrary.wiley.com/doi/10.1111/1467-9965.00068/abstract}.

\bibitem[Beyer and Sendhoff(2007)]{beyer2007robust}
Hans-Georg Beyer and Bernhard Sendhoff.
\newblock Robust optimization–a comprehensive survey.
\newblock \emph{Computer methods in applied mechanics and engineering},
  196\penalty0 (33):\penalty0 3190--3218, 2007.
\newblock URL
  \url{http://www.sciencedirect.com/science/article/pii/S0045782507001259}.

\bibitem[Campbell(2005)]{campbell2005review}
Sean~D. Campbell.
\newblock \emph{A review of backtesting and backtesting procedures}.
\newblock Divisions of Research \& Statistics and Monetary Affairs, Federal
  Reserve Board, 2005.
\newblock URL
  \url{http://www.federalreserve.gov/pubs/FEDS/2005/200521/200521pap.pdf}.

\bibitem[Carver(2014)]{carver2014back-testing}
L.~Carver.
\newblock Back-testing expected shortfall: mission impossible?
\newblock \emph{Risk}, October 2014.

\bibitem[Casalaina-Martin et~al.(2017)Casalaina-Martin, Frongillo, Morgan, and
  Waggoner]{casalaina-martin2017multi-observation}
Sebastian Casalaina-Martin, Rafael Frongillo, Tom Morgan, and Bo~Waggoner.
\newblock Multi-{Observation} {Elicitation}.
\newblock In \emph{Proceedings of the 30th {Conference} on {Learning}
  {Theory}}, pages 1--18, 2017.

\bibitem[Castro et~al.(2006)Castro, Willett, and Nowak]{castro2006faster}
Rui~M Castro, Rebecca Willett, and Robert Nowak.
\newblock Faster rates in regression via active learning.
\newblock In \emph{Advances in Neural Information Processing Systems}, pages
  179--186, 2006.

\bibitem[Cherny and Madan(2009)]{cherny2009new}
Alexander Cherny and Dilip Madan.
\newblock New measures for performance evaluation.
\newblock \emph{Review of Financial Studies}, 22\penalty0 (7):\penalty0
  2571--2606, 2009.
\newblock URL \url{http://rfs.oxfordjournals.org/content/22/7/2571.short}.

\bibitem[Doltsinis and Kang(2004)]{doltsinis2004robust}
Ioannis Doltsinis and Zhan Kang.
\newblock Robust design of structures using optimization methods.
\newblock \emph{Computer Methods in Applied Mechanics and Engineering},
  193\penalty0 (23):\penalty0 2221--2237, 2004.
\newblock URL
  \url{http://www.sciencedirect.com/science/article/pii/S0045782504000787}.

\bibitem[Embrechts et~al.(2014)Embrechts, Puccetti, Rüschendorf, Wang, and
  Beleraj]{embrechts2014academic}
Paul Embrechts, Giovanni Puccetti, Ludger Rüschendorf, Ruodu Wang, and
  Antonela Beleraj.
\newblock An {Academic} {Response} to {Basel} 3.5.
\newblock \emph{Risks}, 2\penalty0 (1):\penalty0 25--48, February 2014.
\newblock \doi{10.3390/risks2010025}.
\newblock URL \url{http://www.mdpi.com/2227-9091/2/1/25}.

\bibitem[Emmer et~al.(2013)Emmer, Kratz, and Tasche]{emmer2013what}
Susanne Emmer, Marie Kratz, and Dirk Tasche.
\newblock What is the best risk measure in practice? {A} comparison of standard
  measures.
\newblock \emph{Preprint}, 2013.
\newblock URL \url{http://papers.ssrn.com/sol3/papers.cfm?abstract_id=2370378}.

\bibitem[Engle and Bollerslev(1986)]{engle1986modelling}
Robert~F. Engle and Tim Bollerslev.
\newblock Modelling the persistence of conditional variances.
\newblock \emph{Econometric reviews}, 5\penalty0 (1):\penalty0 1--50, 1986.
\newblock URL
  \url{http://www.tandfonline.com/doi/abs/10.1080/07474938608800095}.

\bibitem[Fissler et~al.(2015)Fissler, Ziegel, and
  Gneiting]{fissler2015expected}
Tobias Fissler, Johanna~F. Ziegel, and Tilmann Gneiting.
\newblock Expected {Shortfall} is jointly elicitable with {Value} at {Risk} -
  {Implications} for backtesting.
\newblock \emph{arXiv:1507.00244 [q-fin]}, July 2015.
\newblock URL \url{http://arxiv.org/abs/1507.00244}.
\newblock arXiv: 1507.00244.

\bibitem[Frongillo and Kash(2015)]{frongillo2015elicitation-2}
Rafael Frongillo and Ian~A. Kash.
\newblock On {Elicitation} {Complexity}.
\newblock In \emph{Advances in {Neural} {Information} {Processing} {Systems}
  29}, 2015.

\bibitem[Frongillo and Waggoner(2016)]{frongillo2016ec}
Rafael Frongillo and Bo~Waggoner.
\newblock {EC} 2016 {Tutorial}: {Elicitation} and {Machine} {Learning}, July
  2016.

\bibitem[Föllmer and Schied(2004)]{follmer2004stochastic}
Hans Föllmer and Alexander Schied.
\newblock \emph{Stochastic {Finance}: {An} {Introduction} in {Discrete}
  {Time}}.
\newblock Walter de Gruyter, January 2004.
\newblock ISBN 978-3-11-018346-7.

\bibitem[Glynn(1987)]{glynn1987upper}
Peter~W Glynn.
\newblock Upper bounds on poisson tail probabilities.
\newblock \emph{Operations research letters}, 6\penalty0 (1):\penalty0 9--14,
  1987.

\bibitem[Gneiting(2011)]{gneiting2011making}
T.~Gneiting.
\newblock Making and {Evaluating} {Point} {Forecasts}.
\newblock \emph{Journal of the American Statistical Association}, 106\penalty0
  (494):\penalty0 746--762, 2011.

\bibitem[Hadsell et~al.(2006)Hadsell, Chopra, and
  LeCun]{hadsell2006dimensionality}
Raia Hadsell, Sumit Chopra, and Yann LeCun.
\newblock Dimensionality reduction by learning an invariant mapping.
\newblock In \emph{Computer vision and pattern recognition, 2006 {IEEE}
  computer society conference on}, volume~2, pages 1735--1742. IEEE, 2006.
\newblock URL \url{http://ieeexplore.ieee.org/abstract/document/1640964/}.

\bibitem[Jouhaud et~al.(2007)Jouhaud, Sagaut, Montagnac, and
  Laurenceau]{jouhaud2007surrogate-model}
J.-C. Jouhaud, Pierre Sagaut, Marc Montagnac, and Julien Laurenceau.
\newblock A surrogate-model based multidisciplinary shape optimization method
  with application to a 2d subsonic airfoil.
\newblock \emph{Computers \& Fluids}, 36\penalty0 (3):\penalty0 520--529, 2007.
\newblock URL
  \url{http://www.sciencedirect.com/science/article/pii/S0045793006000582}.

\bibitem[Kamath et~al.(2015)Kamath, {\c{S}}a{\c{s}}o{\u{g}}lu, and
  Tse]{kamath2015optimal}
Govinda~M Kamath, Eren {\c{S}}a{\c{s}}o{\u{g}}lu, and David Tse.
\newblock Optimal haplotype assembly from high-throughput mate-pair reads.
\newblock In \emph{Information Theory (ISIT), 2015 IEEE International Symposium
  on}, pages 914--918. IEEE, 2015.

\bibitem[Lambert et~al.(2008)Lambert, Pennock, and
  Shoham]{lambert2008eliciting}
Nicolas~S. Lambert, David~M. Pennock, and Yoav Shoham.
\newblock Eliciting properties of probability distributions.
\newblock In \emph{Proceedings of the 9th {ACM} {Conference} on {Electronic}
  {Commerce}}, pages 129--138, 2008.

\bibitem[Lambert(2011)]{lambert2011elicitation}
N.S. Lambert.
\newblock Elicitation and {Evaluation} of {Statistical} {Forecasts}.
\newblock \emph{Preprint}, 2011.

\bibitem[McDiarmid(1989)]{mcdiarmid1989method}
Colin McDiarmid.
\newblock On the method of bounded differences.
\newblock \emph{Surveys in combinatorics}, 141\penalty0 (1):\penalty0 148--188,
  1989.

\bibitem[Meir and Zhang(2003)]{meir2003generalization}
Ron Meir and Tong Zhang.
\newblock Generalization error bounds for bayesian mixture algorithms.
\newblock \emph{Journal of Machine Learning Research}, 4\penalty0
  (Oct):\penalty0 839--860, 2003.

\bibitem[Mitzenmacher and Upfal(2005)]{mitzenmacher2005probability}
Michael Mitzenmacher and Eli Upfal.
\newblock \emph{Probability and computing: Randomized algorithms and
  probabilistic analysis}.
\newblock Cambridge university press, 2005.

\bibitem[Ong et~al.(2006)Ong, Nair, and Lum]{ong2006max-min}
Yew-Soon Ong, Prasanth~B. Nair, and Kai~Yew Lum.
\newblock Max-min surrogate-assisted evolutionary algorithm for robust design.
\newblock \emph{IEEE Transactions on Evolutionary Computation}, 10\penalty0
  (4):\penalty0 392--404, 2006.
\newblock URL \url{http://ieeexplore.ieee.org/abstract/document/1665029/}.

\bibitem[Royset et~al.(2016)Royset, Bonfiglio, Vernengo, and
  Brizzolara]{royset2016set-based}
Johannes~O Royset, Luca Bonfiglio, Giuliano Vernengo, and Stefano Brizzolara.
\newblock Set-{Based} {Approach} to {Design} under {Uncertainty} and
  {Applications} to {Shaping} a {Hydrofoil}.
\newblock \emph{Preprint}, 2016.

\bibitem[Schroff et~al.(2015)Schroff, Kalenichenko, and
  Philbin]{schroff2015facenet:}
Florian Schroff, Dmitry Kalenichenko, and James Philbin.
\newblock Facenet: {A} unified embedding for face recognition and clustering.
\newblock In \emph{Proceedings of the {IEEE} {Conference} on {Computer}
  {Vision} and {Pattern} {Recognition}}, pages 815--823, 2015.
\newblock URL
  \url{http://www.cv-foundation.org/openaccess/content_cvpr_2015/html/Schroff_FaceNet_A_Unified_2015_CVPR_paper.html}.

\bibitem[Shahbaz et~al.(2016)Shahbaz, Han, Song, and
  Aizud]{shahbaz2016surrogate-based}
Muhammad Shahbaz, Zhong-Hua Han, W.~P. Song, and M.~Nadeem Aizud.
\newblock Surrogate-based robust design optimization of airfoil using
  inexpensive {Monte} {Carlo} method.
\newblock In \emph{Applied {Sciences} and {Technology} ({IBCAST}), 2016 13th
  {International} {Bhurban} {Conference} on}, pages 497--504. IEEE, 2016.
\newblock URL \url{http://ieeexplore.ieee.org/abstract/document/7429924/}.

\bibitem[Sridharan et~al.(2009)Sridharan, Shalev-Shwartz, and
  Srebro]{sridharan2009fast}
Karthik Sridharan, Shai Shalev-Shwartz, and Nathan Srebro.
\newblock Fast rates for regularized objectives.
\newblock In \emph{Advances in Neural Information Processing Systems}, pages
  1545--1552, 2009.

\bibitem[Steinwart et~al.(2014)Steinwart, Pasin, Williamson, and
  Zhang]{steinwart2014elicitation}
Ingo Steinwart, Chloé Pasin, Robert Williamson, and Siyu Zhang.
\newblock Elicitation and {Identification} of {Properties}.
\newblock In \emph{Proceedings of {The} 27th {Conference} on {Learning}
  {Theory}}, pages 482--526, 2014.

\bibitem[Stone(1982)]{stone1982optimal}
Charles~J Stone.
\newblock Optimal global rates of convergence for nonparametric regression.
\newblock \emph{The Annals of Statistics}, 10\penalty0 (4):\penalty0
  1040--1053, 1982.

\bibitem[Stone et~al.(1980)]{stone1980optimal}
Charles~J Stone et~al.
\newblock Optimal rates of convergence for nonparametric estimators.
\newblock \emph{The Annals of Statistics}, 8\penalty0 (6):\penalty0 1348--1360,
  1980.

\bibitem[Ustinova and Lempitsky(2016)]{ustinova2016learning}
Evgeniya Ustinova and Victor Lempitsky.
\newblock Learning deep embeddings with histogram loss.
\newblock In \emph{Advances in {Neural} {Information} {Processing} {Systems}},
  pages 4170--4178, 2016.
\newblock URL
  \url{http://papers.nips.cc/paper/6464-learning-deep-embeddings-with-histogram-loss}.

\end{thebibliography}

\newpage
\appendix

\section{Proofs from background}
\label{app:background}

\begin{Proof}[Proof of Lemma~\ref{lemma:rademacher}]

Let $\Prob$ be the probability measure operator with respect to $P$, and let $\Pn$ be the empirical measure operator with respect to the sample $(X_1, Y_1), \ldots, (X_n, Y_n)$; that is, for each $f \in \F$, we have $\Prob \loss_f = \E [ \loss_f(X, Y) ]$ and $\Pn \loss_f = \frac{1}{n} \sum_{i=1}^n \loss_f(X_i, Y_i)$.

From the bounded differences inequality (Lemma 1.2 of \cite{mcdiarmid1989method}) with bounded differences $c_i = \frac{B}{n}$,
\begin{align*}
\Pr \left( \sup_{f \in \F} (\Prob - \Pn) \loss_f > \E \left[ \sup_{f \in \F} (\Prob - \Pn) \loss_f \right] + t  \right) \leq e^{-2 n t^2 / B^2} .
\end{align*}

Next, let $(X'_i, Y'_i)_{i \in [n]}$ be an independent copy of $(X_i, Y_i)_{i \in [n]}$. 
Jensen's inequality implies that
\begin{align*}
\E \left[ \sup_{f \in \F} (\Prob - \Pn) \loss_f \right] 
&= \E \left[ \sup_{f \in \F} \frac{1}{n} \sum_{i=1}^n \left( \E [ \loss_f(X'_i,Y'_i) ] - \loss_f(X_i,Y_i) \right) \right] \\
&\leq \E \left[ \sup_{f \in \F} \frac{1}{n} \sum_{i=1}^n \left( \loss_f(X'_i,Y'_i) - \loss_f(X_i,Y_i) \right) \right] ,
\end{align*}
which, for independent Rademacher random variables $\epsilon_1, \ldots, \epsilon_n$ is in turn equal to
\begin{align*}
\E \left[ \E_{\epsilon_1, \ldots, \epsilon_n} \left[ \sup_{f \in \F} \frac{1}{n} \sum_{i=1}^n \epsilon_i \left( \loss_f(X'_i,Y'_i) - \loss_f(X_i,Y_i) \right) \right] \right] 
&\leq 2 \E \left[ \E_{\epsilon_1, \ldots, \epsilon_n} \left[ \sup_{f \in \F} \frac{1}{n} \sum_{i=1}^n \epsilon_i \loss_f(X_i,Y_i) \right] \right] \\
&= 2 \rad_n(\{ \loss_f \colon f \in \F \} ) .
\end{align*}
The result follows from Theorem 7 of \cite{meir2003generalization}, which shows that under the $L$-Lipschitzness of the loss as a function of the prediction $\hat{y} = f(x)$, we have the comparison inequality $\rad_n(\{ \loss_f \colon f \in \F \} )  \leq L \rad_n(\F)$.
\end{Proof}

\section{Proofs of sampling methods}
\label{app:sampling}

\begin{lemma}[Na\"ive Sampling Lemma]
\label{lemma:sampling}
Let $d \in \N$ and let $x^*_1, \ldots, x^*_n, x_1, \ldots, x_N$ be drawn independently from a distribution $\D$ on $\X = [0,1]^d$. If $N \geq m n^{(d+3)/2} d^{d/2} \log \frac{2 m n}{\delta}$, then with probability at least $1 - \delta$, there is a set $\mathcal{J}$ of cardinality at least $n - \sqrt{(n \log \frac{2}{\delta}) / 2}$ for which, for each $x^*_j$ with $j \in \mathcal{J}$, there are at least $m$ points $x_{i_{j,1}}, \ldots, x_{i_{j,m}}$ satisfying $\|x^*_j - x_{i_{j,m}}\|_2 \leq \frac{1}{\sqrt{n}}$, and all the $i_{1,1}, \ldots, i_{1,m}, \ldots, i_{|\mathcal{J}|,1}, \ldots, i_{|\mathcal{J}|,m} \in [N]$ are distinct.
\end{lemma}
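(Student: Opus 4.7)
The plan is to partition $\X=[0,1]^d$ into axis-aligned hypercubic cells whose diameter is exactly $1/\sqrt{n}$ and classify cells into a ``heavy'' and ``light'' regime by a probability-mass threshold. The set $\mathcal{J}$ will be the set of $j$ whose $x^*_j$ falls in a heavy cell: Hoeffding's inequality will control how many $x^*_j$'s fall in light cells, while a Chernoff-style concentration bound together with a union bound over heavy cells will ensure that every heavy cell receives enough of the $x_i$'s to supply $m$ distinct neighbors to each $x^*_j$ lying in it.

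\textbf{Setup.} Partition $[0,1]^d$ into $K = \lceil \sqrt{nd}\rceil^d = O((nd)^{d/2})$ axis-aligned cells of side length $1/\sqrt{nd}$, so that any two points in a common cell are within Euclidean distance $1/\sqrt{n}$. Let $p_{\min} := 1/(K\sqrt{n})$ and call a cell \emph{heavy} if its $\D$-mass is at least $p_{\min}$, otherwise \emph{light}. The total mass of light cells is $q \leq K p_{\min} = 1/\sqrt{n}$.

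\textbf{Step 1 (few $x^*_j$ in light cells).} Let $\mathcal{J} := \{j \in [n] : x^*_j \text{ lies in a heavy cell}\}$. The variables $\mathbbm{1}[x^*_j \text{ in a light cell}]$ are i.i.d.~Bernoulli with mean $q \leq 1/\sqrt{n}$, so $\E\,|[n]\setminus \mathcal{J}| \leq \sqrt{n}$. Hoeffding's inequality yields $|[n]\setminus\mathcal{J}| = O(\sqrt{n\log(1/\delta)})$ with probability at least $1-\delta/2$, matching (up to constants) the $\sqrt{(n\log(2/\delta))/2}$ bound in the lemma statement.

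\textbf{Step 2 (heavy cells are populated).} For each heavy cell $C$, let $B_C = \sum_{i=1}^{N}\mathbbm{1}[x_i \in C]$; then $\E B_C = N p_C \geq N p_{\min}$. Substituting $p_{\min}=1/(K\sqrt{n})$ into the hypothesis on $N$ yields $Np_{\min} \geq mn\log(2mn/\delta)$, which is $\gg mn$. A multiplicative Chernoff bound gives $\Pr[B_C < mn] \leq \exp(-\Omega(Np_{\min})) \leq \delta/(2K)$ under this choice of $N$, and a union bound over the at-most $K$ heavy cells ensures that, with probability at least $1-\delta/2$, \emph{every} heavy cell contains at least $mn$ of the $x_i$'s.

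\textbf{Step 3 (index assignment).} On the joint success event (Steps 1 and 2), every heavy cell contains at least $mn$ of the $x_i$'s and at most $n$ of the $x^*_j$'s, so a greedy assignment gives each $j\in\mathcal{J}$ an $m$-tuple $x_{i_{j,1}},\dots,x_{i_{j,m}}$ drawn from the $x_i$'s in its cell, with all chosen indices distinct within the cell. Distinctness across cells is automatic, since each $x_i$ lies in a unique cell, and each $x_{i_{j,k}}$ lies within $1/\sqrt{n}$ of $x^*_j$ because they share a cell of diameter $1/\sqrt{n}$. A final union bound over the failure events in Steps~1 and~2 completes the proof.

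\textbf{Main obstacle.} The delicate point is balancing the threshold $p_{\min}$: making $p_{\min}$ too large forces too many cells to be light, which inflates $|[n]\setminus \mathcal{J}|$; making it too small forces $N$ to be astronomically large to guarantee $\geq mn$ samples in every heavy cell after the union bound. The sweet spot $p_{\min} \asymp 1/(K\sqrt{n})$ is what yields both the $O(\sqrt{n\log(1/\delta)})$ control of bad points in Step 1 and the $N = \Omega(m K n^{3/2}) = \Omega(m n^{(d+3)/2} d^{d/2})$ requirement in Step 2, with the extra $\log(2mn/\delta)$ factor in the lemma's bound arising precisely from the union bound over the $K$ heavy cells.
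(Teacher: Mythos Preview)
Your overall strategy---partition into cells of diameter $1/\sqrt{n}$, split cells into heavy/light at threshold $1/(K\sqrt{n})$, use Hoeffding for the light-cell count, and concentration for the heavy cells---is exactly the paper's approach. Step~1 matches the paper verbatim.

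There is, however, a real gap in your Step~2. You claim $\Pr[B_C<mn]\le\exp(-\Omega(Np_{\min}))\le\delta/(2K)$ and then union-bound over all $K$ heavy cells. With $Np_{\min}=mn\log(2mn/\delta)$, the Chernoff bound gives $\Pr[B_C<mn]\le(\delta/(2mn))^{\Omega(mn)}$, but $K$ can be as large as $(nd)^{d/2}$, so the union bound over $K$ cells needs a factor $\log K\asymp d\log(nd)$ in the exponent, not merely $\log(2mn/\delta)$. When $d$ is large relative to $mn$ (e.g.\ $m=1$, $n=2$, $d=100$) your inequality $\exp(-\Omega(Np_{\min}))\le\delta/(2K)$ simply fails, and your closing sentence that the $\log(2mn/\delta)$ factor ``aris[es] precisely from the union bound over the $K$ heavy cells'' is therefore incorrect.

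The paper avoids this by implicitly using the batching in Algorithm~1: the $N$ points are split into $n$ disjoint batches of size $N/n$, batch $j$ is dedicated to $x^*_j$, and each batch is further split into $m$ sub-batches of size $r\sqrt{n}\log(2mn/\delta)$; each sub-batch lands one point in $x^*_j$'s cell with failure probability at most $\delta/(2mn)$, and the union bound is over exactly $mn$ events---hence the $\log(2mn/\delta)$. Distinctness of indices is automatic since sub-batches are disjoint. An equally easy fix within your framework is to condition on $x^*_1,\ldots,x^*_n$ first and union-bound only over the at most $n$ heavy cells that actually contain some $x^*_j$; the Chernoff bound then closes comfortably.
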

\begin{proof}
Take $C_1, \ldots, C_r$ to be a partition of $\X$ for which every cell $C_j$ has diameter $\sup_{x,x' \in C_j} \|x - x'\|_2$ at most $\varepsilon$. Observe that we may always take $r \leq \N(\X, \varepsilon/2)$, where $\N(\X, \varepsilon) $ is the minimum number of radius-$\varepsilon$ balls in the Euclidean norm $\|\cdot\|_2$ whose union contains $\X$.  Note that $r \leq (\sqrt{d}/\epsilon)^d$.

We take $\varepsilon = \frac{1}{\sqrt{n}}$ 
and partition the set of cells of $\X$ into light cells and heavy cells, where any light cell $C_j$ satisfies $\Pr(C_j) \leq \frac{1}{r\sqrt{n}}$. Since there are at most $r$ cells, the aggregate probability measure among all the light cells is at most $\frac{1}{\sqrt{n}}$. Hoeffding's inequality implies that only with probability at most $\delta/2$ will more than 
$\sqrt{(n \log \frac{2}{\delta}) / 2}$ 
samples of $x_1^*, \ldots, x_n^*$ fall into light cells. 
The remainder of the points therefore fall into heavy cells, each of which has probability measure at least $\frac{1}{r\sqrt{n}}$. Now, for some fixed $x_j^*$ in a heavy cell, if we sample $r\sqrt{n} \log \frac{1}{\delta}$ points, then with probability at least $1 - \delta$ at least one of these latter points would fall into the same cell as $x_j^*$. Thus, if we sample 
$$N = m n^{3/2}r \log \frac{2 m n}{\delta} \leq m n^{(d+3)/2} d^{d/2} \log \frac{2 m n}{\delta}$$ 
points $x_1, \ldots, x_N$, then with probability at least $1 - \delta/2$ every $x_j^*$ in a heavy cell will have at least $m$ samples falling into its cell.
\end{proof}

\newcounter{temptheorem}
\setcounter{temptheorem}{\value{theorem}}
\setcounter{theorem}{\value{impsamplemcntr}}
\impsamplem*
\setcounter{theorem}{\value{temptheorem}}
\begin{proof}
We will prove this for the case when $m = 1$, from which the general result follows by setting $\delta = \delta / m$ and repeating $m$ times.

We partition $[0,1]^d$ into $r=(\sqrt{d}/\epsilon)^d$ hypercubes of width $w=\epsilon / \sqrt{d}$, $C_1,\ldots,C_r$.  $S_i = |\{i | x_i^* \in C_i\}|$ is the number of $x_i^*$'s that lie in the $i\nth$ hypercube.  Similarly, 
let $T_i = |\{i | x_i \in C_i\}|$ be the number of $x_i$'s that fall in the $i\nth$ hypercube.  We will now show that for all $i \in [r]$, $T_i \geq S_i$ with probability at least $1-\delta$.  The lemma follows from this as we can then simply match up points within each hypercube and the maximum distance between two points in a hypercube is $w\sqrt{d}=\epsilon$.

In order to bound $\Pr[\forall i \; T_i \geq S_i]$, we will first consider a slight alteration of our setting.  Instead of having fixed numbers of samples $n$ and $N$, they will be random variables $\tilde{n} \sim \mathrm{Pois}(2n)$ and $\tilde{N} \sim \mathrm{Pois}(N/2)$.  In this setting, we let $\tilde{S_i}$ and $\tilde{T_i}$ be the number of samples of each category falling into the $i\nth$ interval.  The key property we will use here is that 
$$\tilde{S_i} \sim \mathrm{Bin}(\tilde{n},w^d) = \mathrm{Pois}(2nw^d),$$ 
and analogously $\tilde{T_i} \sim \mathrm{Pois}(Nw^d/2)$.

Using this we have
\begin{align*}
\Pr[\tilde{T_i} < \tilde{S_i}] &= \Pr[\mathrm{Pois}(N w^d/2) < \mathrm{Pois}(2n w^d)]\\
&\leq e^{-\left(\sqrt{Nw^d/2}-\sqrt{2nw^d}\right)^2}
\end{align*}
where the final inequality is a standard bound on Poisson races which follows from a Chernoff bound \cite[Appendix A]{kamath2015optimal}.  By a union bound, the probability that $\tilde{T_i} \geq \tilde{S_i}$ for all $i \in [r]$ is lower bounded by
$$1 - r \cdot e^{-\left(\sqrt{Nw^d/2}-\sqrt{2nw^d}\right)^2}.$$

Now observe that
\begin{align*}
\Pr[\forall i \; T_i \geq S_i] &= \Pr[\forall i \; \tilde{T_i} \geq \tilde{S_i} \mid \tilde{N} = N \wedge \tilde{n} = n] \\
&\geq \Pr[\forall i \; \tilde{T_i} \geq \tilde{S_i} \mid \tilde{N} \leq N \wedge \tilde{n} \geq n]
\end{align*}
because increasing $N$ or decreasing $n$ only increases our chances of finding a matching.  Let $I$ be the event that $\tilde{N} \leq N \wedge \tilde{n} \geq n$, and let $\bar{I}$ be its negation.
\begin{align*}
\Pr[\forall i \; \tilde{T_i} \geq \tilde{S_i} \mid I] &= \frac{\Pr[\forall i \; \tilde{T_i} \geq \tilde{S_i}] - \Pr[\forall i \; \tilde{T_i} \geq \tilde{S_i} \mid \bar{I}]\Pr[\bar{I}]}{\Pr[I]} \\
&\geq 1 -  r \cdot e^{-\left(\sqrt{Nw^d/2}-\sqrt{2nw^d}\right)^2} - \Pr[\bar{I}]\\
&\geq 1 -  r \cdot e^{-\left(\sqrt{Nw^d/2}-\sqrt{2nw^d}\right)^2} - e^{-\Theta(n+N)}.
\end{align*}
The final inequality follows from standard Poisson tail bounds \cite[Proposition 1]{glynn1987upper}.  Plugging in $N = C \left(n + \left(\frac{\sqrt{d}}{\epsilon}\right)^d \left( \log \frac{1}{\delta}+d\log \frac{d}{\epsilon}\right)\right)$, for a sufficiently large constant $C$, gives us the lemma.
\end{proof}

\begin{lemma}[Improved Nonuniform Sampling Lemma]
\label{lem:improved-nonuniform}
Let $d\in \N$ and let $x^*_1, \ldots, x^*_n, x_1, \ldots, x_N$ be drawn independently from a distribution $\D$ on $\X = [0,1]^d$. If $N \geq C m n^{(d+1)/2} d^{d/2} \left(\log \frac{m}{\delta} + d \log(nd)\right)$ for a sufficiently large constant $C$, then with probability at least $1 - \delta$, there is a set $\mathcal{J}$ of cardinality at least $n - \sqrt{(n \log \frac{2}{\delta}) / 2}$ for which, for each $x^*_j$ with $j \in \mathcal{J}$, there are at least $m$ points $x_{i_{j,1}}, \ldots, x_{i_{j,m}}$ satisfying $\|x^*_j - x_{i_{j,m}}\|_2 \leq \frac{1}{\sqrt{n}}$, and all the $i_{1,1}, \ldots, i_{1,m}, \ldots, i_{|\mathcal{J}|,1}, \ldots, i_{|\mathcal{J}|,m} \in [N]$ are distinct.
\end{lemma}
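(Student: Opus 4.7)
The plan is to combine the cell-partitioning and heavy/light-cell dichotomy from the proof of Lemma~\ref{lemma:sampling} (Na\"ive Sampling) with the Poissonization and Poisson race argument from the proof of Lemma~\ref{lem:improved-sampling} (Improved Uniform Sampling). As in the latter, I would first reduce to the case $m=1$ by splitting the $N$ samples $x_1,\ldots,x_N$ into $m$ equally sized groups, running the argument on each group separately, and doing a union bound over the $m$ repetitions.

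For the $m=1$ case, partition $\X = [0,1]^d$ into $r = (nd)^{d/2}$ axis-aligned hypercubes of side length $1/\sqrt{nd}$, each of diameter $1/\sqrt{n}$. For each cell $C_i$, let $p_i = \Pr_\D[C_i]$, and call $C_i$ \emph{heavy} if $p_i \geq 1/(r\sqrt{n})$ and \emph{light} otherwise. The aggregate mass of light cells is at most $1/\sqrt{n}$, so by a Hoeffding bound (on the indicator of landing in a light cell), with probability at least $1-\delta/2$ the number of $x^*_j$ landing in light cells is of order $\sqrt{n} + \sqrt{(n\log(2/\delta))/2}$; any such $x^*_j$ is simply dropped from the returned set $\mathcal{J}$.

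The main technical step is to show that, with high probability, for every heavy cell $C_i$ the number $T_i$ of $x_k$'s in $C_i$ is at least the number $S_i$ of $x^*_j$'s in $C_i$, so that a valid within-cell matching exists and each match lies at distance at most $1/\sqrt{n}$. As in Lemma~\ref{lem:improved-sampling}, I Poissonize by replacing $n$ and $N$ with $\tilde{n}\sim\mathrm{Pois}(2n)$ and $\tilde{N}\sim\mathrm{Pois}(N/2)$, so that $\tilde{S}_i \sim \mathrm{Pois}(2np_i)$ and $\tilde{T}_i \sim \mathrm{Pois}(Np_i/2)$ are independent across cells. The standard Poisson race bound gives $\Pr[\tilde{T}_i < \tilde{S}_i] \leq \exp(-p_i (\sqrt{N/2}-\sqrt{2n})^2)$, and plugging in the heavy-cell lower bound $p_i \geq 1/(r\sqrt{n})$ together with a union bound over the at most $r$ heavy cells yields failure probability at most $r\exp(-(\sqrt{N/2}-\sqrt{2n})^2/(r\sqrt{n}))$. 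Requiring this to be at most $\delta/4$ and solving for $N$ gives $N \geq C\bigl(n + r\sqrt{n}\log(r/\delta)\bigr)$; substituting $r = (nd)^{d/2}$, re-introducing the factor $m$ from the outer union bound, and undoing the Poissonization by conditioning on $\{\tilde{N}\leq N\}\cap\{\tilde{n}\geq n\}$ (as in Lemma~\ref{lem:improved-sampling}, with $e^{-\Theta(n+N)}$ tails absorbed) yields exactly the stated $N \geq Cmn^{(d+1)/2}d^{d/2}(\log(m/\delta) + d\log(nd))$.

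The main obstacle is that the Poisson race bound is weakest precisely at the heavy/light threshold mass $p_i = 1/(r\sqrt{n})$, so the calibration of this threshold directly controls both the factor $d^{d/2}$ and the $n^{(d+1)/2}$ factor in $N$; choosing a finer partition would shrink the per-cell diameter below $1/\sqrt{n}$ but inflate $r$ (and hence $N$) correspondingly, so a tighter diameter would not help. A secondary subtlety is that the Hoeffding step only controls deviations of the light-cell count around its mean $\sqrt{n}$, so $\mathcal{J}$ really should be allowed to have size $n - O(\sqrt{n}) - O(\sqrt{n\log(1/\delta)})$; I read the stated bound $|\mathcal{J}| \geq n - \sqrt{(n\log(2/\delta))/2}$ (identical to the conclusion of Lemma~\ref{lemma:sampling}) as implicitly absorbing the $\sqrt{n}$ mean-term into constants in the same way.
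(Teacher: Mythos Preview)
Your proposal is correct and matches the paper's proof essentially line for line: the paper also combines the heavy/light-cell dichotomy of Lemma~\ref{lemma:sampling} (threshold $1/(r\sqrt{n})$ with $r=(nd)^{d/2}$) with the Poissonization and Poisson-race argument of Lemma~\ref{lem:improved-sampling} applied to the heavy cells, arriving at the same sufficient condition $N \geq C'm\bigl(n + r\sqrt{n}\log\frac{mr}{\delta}\bigr)$. Your observation about the missing $\sqrt{n}$ mean term in the Hoeffding step is accurate and applies equally to the paper's Lemma~\ref{lemma:sampling}; the paper silently absorbs it in the same way.
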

\begin{proof}
This follows from a combination of the proofs of Lemma~\ref{lemma:sampling} and Lemma~\ref{lem:improved-sampling}.  As in Lemma~\ref{lemma:sampling}, we partition $[0,1]^d$ into heavy and light cells of diameter $1/\sqrt{n}$ and see that with probability $\delta / 2$ at most $\sqrt{(n \log \frac{2}{\delta}) / 2}$ $x_i^*$'s fall in light cells.  We then apply the argument of Lemma~\ref{lem:improved-sampling} to these heavy cells, using $\frac{1}{r\sqrt{n}}$  (for $r =  (nd)^{d/2}$) as their probability mass instead of $w^d$.  Thus, it suffices that 
$$N\geq C m n^{(d+1)/2} d^{d/2} \left(\log \frac{m}{\delta} + d \log(nd)\right) \geq C' m \left(n + r\sqrt{n} \log \frac{mr}{\delta}\right)$$ 
for a sufficiently large constant $C$.
\end{proof}

\section{Proofs of excess risk bounds}
\label{app:excess-risk}

\subsection{Proof of Theorem ~\ref{thm:bound-imperfect-sampling}}
\label{app:slow-rates}

\begin{Proof}[Proof of Theorem~\ref{thm:bound-imperfect-sampling}]

For each $i \in [n]$ and $j \in [m]$, draw $Y_{i,j}$ independently according to distribution $\D_{X_i^*}$. This ``clean'' sample is simply a theoretical device for the analysis.

We first set up some convenient notation. 
For each $i \in [n]$, define $\mathbf{Y}_i := (Y_{i,1}, \ldots, Y_{i,m})$ and $\mathbf{\tilde{Y}}_i := (\tilde{Y}_{i,1}, \ldots, \tilde{Y}_{i,m})$. 
Let $\Prob$ be a probability measure operator, defined according to $\Prob \loss_f = \E [ \loss_f(X,\mathbf{Y}) ]$; here, $X$ is drawn from $\D$, and, conditionally on $X$, $\mathbf{Y} = (Y_1, \ldots, Y_m)$ is drawn from $(\D_X)^m$. For a fixed $f$, $\Prob$ takes $\loss_f$ to its expected value on a new draw from the distribution $X$ and an $m$-tuple $\mathbf{Y}$ from $\D_X$.
We also define the empirical probability measure operators $\Pn$ and $\cPn$ via 
\begin{align*}
\Pn\loss_f = \frac{1}{n} \sum_{i=1}^n \loss_f(X^*_j, \mathbf{Y}_i) 
\qquad \text{and} \qquad
\cPn\loss_f = \frac{1}{n} \sum_{i=1}^n \loss_f(X^*_i, \mathbf{\tilde{Y}}_i) .  
\end{align*}

Now according to Lemma \ref{lemma:union-argument} below, we have for any positive $t_1,t_2,t_3$,
\begin{align*}
&\Pr \left( \Prob \loss_{\cf} > \Prob \loss_{f^*} + t_1 + 2 t_2 + t_3 \right) \\
&\leq \Pr \left( \sup_{f \in \F} (\Prob - \Pn) \loss_f > t_1 \right) 
          + \Pr \left( \sup_{f \in \F} |(\Pn - \cPn) \loss_f| > t_2 \right) 
          + \Pr \bigl( (\Pn - \Prob) \loss_{f^*} > t_3 \bigr) .
\end{align*}
From Lemma~\ref{lemma:rademacher}, the first probability is at most $\delta / 4$ when $t_1 = 2 L \rad_n(\F) + B \sqrt{\frac{\log(4/\delta)}{2 n}}$. 
From Hoeffding's inequality, the third probability is at most $\delta / 4$ when $t_3 = B \sqrt{\frac{\log(4/\delta)}{2 n}}$ (note that $f^*$ is fixed). 
The remainder of the proof controls the second probability. As we will see, we will be able to take $t_2 = O \left( B \sqrt{\frac{\log(1/\delta)}{n}} \right)$ when the probability is at most $\delta/2$.

First, under Lemma~\ref{lem:improved-nonuniform}, 
with probability at least $1 - \delta/4$, 
there is a subset $\mathcal{I}_G \subset [n]$ of cardinality at least 
$n_G := n - \sqrt{(n \log \frac{8}{\delta}) / 2}$ 
for which, for each $X^*_i$ with $i \in \mathcal{I}_G$, 
there are at least $m$ points $X_{k_{i,1}}, \ldots, X_{k_{i,m}}$ within distance $\varepsilon$ of $x^*_i$, 
and all the $k_{1,1}, \ldots, k_{1,m}, \ldots, k_{n,1}, \ldots, k_{n,m} \in [N]$ are distinct.

Next, we make the observation that the \emph{observed} sample can be obtained by the following corruption modifications to $(\mathbf{Y}_i)_{i \in [n]}$.
\begin{enumerate}
\item For $i \in [n] \setminus \mathcal{I}_G$, draw $\tilde{Y}_{i,j}$ from distribution $\D_{X_{i,j}}$.
\item For $i \in \mathcal{I}_G$, observe that Assumption \eqref{eqn:tv-lipschitz} implies that, without loss of generality, we can view each $Y_{i,j}$ as drawn in the following way. First, set $\tilde{Y}_{i,j}$ to $Y_{i,j}$. Next, draw a Bernoulli random variable $Z_{i,j}$ with success probability $\tau := K \varepsilon$, and if $Z_{i,j} = 1$, we corrupt $\tilde{Y}_{i,j}$ by setting it (again) to a new draw from some distribution $Q_{i,j}$ that can depend on both $X^*_i$ and $X_{i,j}$.
\end{enumerate}

For each $i$, if $Z_{i,j} = 0$, we say that $(i,j)$ are $\good$, and if $(i,1), \ldots, (i,m)$ all are $\good$, we say that $i$ is $\good$. If some $i$ is not $\good$, then it is $\bad$. 
Clearly, for each $i$ separately, with probability at least $1 - m K / \sqrt{n}$ over $(Z_{i,j})_{j \in [m]}$ it holds that $i$ is $\good$ (recall that $m \tau = m K \varepsilon = m K / \sqrt{n}$). 
Thus, from Hoeffding's inequality the probability (over $(Z_{i,j})_{i \in [n], j \in [m]}$) that at least $(C  +1) \frac{m K n_G}{\sqrt{n}}$ of the $i$'s are $\bad$ is at most $e^{-2 (n_G / n) (m K C)^2} \leq e^{-C^2}$ (recall that $n \geq 2 \log \frac{8}{\delta}$) , so if $C = \sqrt{\log \frac{4}{\delta}}$ then this probability is at most $\delta/4$ (and our total probability of failure thus far is $\delta/2$). We denote the (further diminished) good set of indices by $\mathcal{I}'_G := \left\{i \in \mathcal{I}_G \colon i \text{ is } \good \right\}$; this set has cardinality at least $n_G' := n_G \left( 1 - \frac{\sqrt{\log \frac{4}{\delta}} + m K}{\sqrt{n}} \right)$ with probability at least $1 - \delta/2$.

From the above argument, we see that with probability at least $1 - \delta/2$, 
at most $n'_B := n - n'_G$ corruption modifications occurred, and hence
\begin{align*}
\sup_{f \in \F} \left|
    \frac{1}{n} \sum_{i=1}^n \loss_f(X^*_i, \mathbf{\tilde{Y}}_i)
    - \frac{1}{n} \sum_{i=1}^n \loss_f(X^*_i, \mathbf{Y}_i) 
\right| 
\leq \frac{B (n - n'_G)}{n} .
\end{align*}

Observe that
\begin{align*}
n - n'_G 
&= n 
      - \left( n - \sqrt{(n \log \frac{8}{\delta}) / 2}  \right) 
         \left( 1 - \frac{\sqrt{\log \frac{4}{\delta}} + m K}{\sqrt{n}} \right) \\
&= \sqrt{n} \left( \sqrt{\log \frac{4}{\delta}} + m K 
                            + \sqrt{\frac{\log \frac{8}{\delta}}{2}} \right) 
       - \sqrt{\frac{\log \frac{8}{\delta}}{2}} \left( \sqrt{\log \frac{4}{\delta}} + m K \right) \\
&\leq \sqrt{n} \left( 2 \sqrt{\log \frac{4}{\delta}} + m K \right) ,
\end{align*}
and thus we may take $t_2 = \frac{B \left( 2 \sqrt{\log \frac{4}{\delta}} + m K \right)}{\sqrt{n}}$.
\end{Proof}

\begin{lemma} \label{lemma:union-argument}
  Under the hypotheses of Theorem \ref{thm:bound-imperfect-sampling} we have the following, with probability taken over the random sample (i.e.~$\cf$, $\Pn$, and $\cPn$ are functions of the random sample):
  \begin{align*}
  &\Pr \left( \Prob \loss_{\cf} > \Prob \loss_{f^*} + t_1 + 2 t_2 + t_3 \right) \\
  &\leq \Pr \left( \sup_{f \in \F} (\Prob - \Pn) \loss_f > t_1 \right) 
            + \Pr \left( \sup_{f \in \F} |(\Pn - \cPn) \loss_f| > t_2 \right) 
            + \Pr \bigl( (\Pn - \Prob) \loss_{f^*} > t_3 \bigr) .
  \end{align*}
\end{lemma}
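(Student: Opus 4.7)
The plan is to decompose the excess risk $\Prob \loss_{\cf} - \Prob \loss_{f^*}$ into four additive pieces and then apply a union bound over three tail events. The key identity to exploit is that $\cf$ is the ERM on the \emph{corrupted} metasamples, so $\cPn \loss_{\cf} \leq \cPn \loss_{f^*}$. This ``optimality'' term is how we bridge from $\cf$ (which depends on the sample) to $f^*$ (which is deterministic and hence amenable to a pointwise Hoeffding bound).

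Concretely, I would insert $\Pn \loss_{\cf}$, $\cPn \loss_{\cf}$, $\cPn \loss_{f^*}$, and $\Pn \loss_{f^*}$ into the telescoping sum, yielding
\begin{align*}
\Prob \loss_{\cf} - \Prob \loss_{f^*}
&= (\Prob - \Pn) \loss_{\cf}
   + (\Pn - \cPn) \loss_{\cf}
   + (\cPn \loss_{\cf} - \cPn \loss_{f^*}) \\
&\quad + (\cPn - \Pn) \loss_{f^*}
   + (\Pn - \Prob) \loss_{f^*}.
\end{align*}
By the ERM property the middle bracket is $\leq 0$, and by bounding the first term by $\sup_{f \in \F}(\Prob - \Pn)\loss_f$, the second and fourth terms each by $\sup_{f \in \F}|(\Pn - \cPn)\loss_f|$, and leaving the last term as is, we obtain the deterministic inequality
\begin{align*}
\Prob \loss_{\cf} - \Prob \loss_{f^*}
\leq \sup_{f \in \F}(\Prob - \Pn)\loss_f
   + 2 \sup_{f \in \F}|(\Pn - \cPn)\loss_f|
   + (\Pn - \Prob)\loss_{f^*}.
\end{align*}

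Finally, I would apply a union bound: if $\Prob \loss_{\cf} > \Prob \loss_{f^*} + t_1 + 2t_2 + t_3$, then at least one of the three summands on the right must exceed its corresponding threshold $t_1$, $t_2$, or $t_3$, giving exactly the stated inequality. There is no serious obstacle here; the only subtlety is being careful that the ``$2t_2$'' constant comes from the \emph{two} occurrences of the $\Pn$-vs-$\cPn$ deviation (once for $\cf$, once for $f^*$), which is why the supremum form with absolute value is needed rather than a one-sided bound.
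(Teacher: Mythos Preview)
Your proof is correct and follows essentially the same telescoping-plus-union-bound approach as the paper. The only difference is organizational: the paper routes the argument through the auxiliary ``clean'' ERM $\hat{f} = \argmin_{f}\Pn\loss_f$ (using both $\cPn\loss_{\cf}\le\cPn\loss_{\hat{f}}$ and $\Pn\loss_{\hat{f}}\le\Pn\loss_{f^*}$, packaged as a separate lemma), whereas you go directly via $\cPn\loss_{\cf}\le\cPn\loss_{f^*}$, which is slightly cleaner and avoids introducing $\hat{f}$ altogether.
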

\begin{proof}
First, observe that (using $\ind{E}$ for the 0-1 indicator function of a random event)
\begin{align*}
&\ind{ 
      \left( \sup_{f \in \F} (\Prob - \Pn) \loss_f \leq t_1 \right) 
      \opmin \left( \sup_{f \in \F} |(\Pn - \cPn) \loss_f| \leq t_2 \right) 
      \opmin \bigl( (\Pn - \Prob) \loss_{f^*} \leq t_3 \bigr) 
} \\
&\leq \ind{ \Prob \loss_{\cf} \leq \Prob \loss_{f^*} + t_1 + 2 t_2 + t_3 } .
\end{align*}
To see this,
\begin{align*}
\Prob \loss_{\cf}
\leq \Pn \loss_{\cf} + t_1 
&\leq \Pn \loss_{\hat{f}} + \Pn (\loss_{\cf} - \loss_{\hat{f}}) + t_1 \\
&\overset{(\text{a})}{\leq} \Pn \loss_{\hat{f}} + 2 t_2 + t_1 \\
&\overset{(\text{b})}{\leq} \Pn \loss_{f^*} + 2 t_2 + t_1 \\
&\leq \Prob \loss_{f^*} + t_3 + 2 t_2 + t_1 ,
\end{align*}
where (a) is from Lemma~\ref{lemma:loss-cf-to-loss-fhat} and (b) is from the optimality of ERM under $\Pn$.

By subtracting each side from one and rearranging, we get an implication on the negation of these events
\begin{align*}
&\ind{ 
      \left( \sup_{f \in \F} (\Prob - \Pn) \loss_f > t_1 \right) 
      \opmax \left( \sup_{f \in \F} |(\Pn - \cPn) \loss_f| > t_2 \right) 
      \opmax \bigl( (\Pn - \Prob) \loss_{f^*} > t_3 \bigr) 
} \\
&\geq \ind{ \Prob \loss_{\cf} > \Prob \loss_{f^*} + t_1 + 2 t_2 + t_3 }
\end{align*}
and we can use the union bound.
\end{proof}

\begin{lemma}\label{lemma:loss-cf-to-loss-fhat}
The following statement is true:
\begin{align*}
\Pn \loss_{\cf} - \Pn \loss_{\hat{f}} 
\leq 2 \sup_{f \in \F} \left| (\Pn - \cPn) \loss_f \right| .
\end{align*}
\end{lemma}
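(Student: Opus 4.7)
The plan is to do a standard ERM-comparison argument, splitting the difference $\Pn \loss_{\cf} - \Pn \loss_{\hat{f}}$ by adding and subtracting $\cPn \loss_{\cf}$ and $\cPn \loss_{\hat{f}}$, and then exploiting the fact that $\cf$ is by definition a minimizer of $\cPn \loss_f$ (it is the output of ERM on the noisy metasamples).

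Concretely, I would write
\begin{align*}
\Pn \loss_{\cf} - \Pn \loss_{\hat{f}}
= \bigl( \Pn \loss_{\cf} - \cPn \loss_{\cf} \bigr)
+ \bigl( \cPn \loss_{\cf} - \cPn \loss_{\hat{f}} \bigr)
+ \bigl( \cPn \loss_{\hat{f}} - \Pn \loss_{\hat{f}} \bigr) .
\end{align*}
The middle term is nonpositive by optimality of $\cf$ under $\cPn$, i.e.\ $\cPn \loss_{\cf} \leq \cPn \loss_{\hat{f}}$. Each of the other two terms has the form $(\Pn - \cPn) \loss_f$ for some $f \in \F$, hence is bounded in absolute value by $\sup_{f \in \F} |(\Pn - \cPn) \loss_f|$. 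Summing the two contributions yields the claimed factor of $2$.

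There is essentially no obstacle here — this is the textbook ``optimization error $\leq$ twice uniform deviation'' trick, just applied across two different empirical measures (clean vs.\ corrupted) rather than across empirical and population measures as in the usual ERM analysis. The only thing to be careful about is the direction of the inequality: we need $\cf$ to minimize the \emph{noisy} empirical risk $\cPn$ (since it is produced by the algorithm, which only sees the noisy metasamples), and $\hat{f}$ to minimize $\Pn$; matching these up correctly is what ensures the middle term has the right sign. The final bound follows immediately.
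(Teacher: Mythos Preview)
Your proposal is correct and essentially identical to the paper's proof: the paper writes the decomposition as two terms $(\Pn \loss_{\cf} - \cPn \loss_{\cf}) + (\cPn \loss_{\cf} - \Pn \loss_{\hat{f}})$ and then applies $\cPn \loss_{\cf} \leq \cPn \loss_{\hat{f}}$ to the second term, which is exactly your three-term split with the nonpositive middle term dropped. The key ingredient in both is the optimality of $\cf$ under $\cPn$, and you have identified and used it correctly.
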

\begin{proof}
Observe that
\begin{align*}
\Pn \loss_{\cf} - \Pn \loss_{\hat{f}} 
&= \left( \Pn \loss_{\cf} - \cPn \loss_{\cf} \right) + \left( \cPn \loss_{\cf} - \Pn \loss_{\hat{f}} \right) \\
&\leq \left( \Pn \loss_{\cf} - \cPn \loss_{\cf} \right) + \left( \cPn \loss_{\hat{f}} - \Pn \loss_{\hat{f}} \right) \\
&\leq 2 \sup_{f \in \F} \left| (\Pn - \cPn) \loss_f \right| ,
\end{align*}
where the first inequality is from the optimality of $\cf$ under $\cPn$.
\end{proof}

\subsection{Proof of Theorem~\ref{thm:bound-faster}}
\label{app:fast-rates}

\begin{Proof}[Proof of Theorem~\ref{thm:bound-faster}]
For each $i \in [n]$ and $j \in [m]$, draw $Y_{i,j}$ independently according to distribution $\D_{X_i^*}$. This ``clean'' sample is simply a theoretical device for the analysis.

We first set up some convenient notation. 
For each $i \in [n]$, define $\mathbf{Y}_i := (Y_{i,1}, \ldots, Y_{i,m})$ and $\mathbf{\tilde{Y}}_i := (\tilde{Y}_{i,1}, \ldots, \tilde{Y}_{i,m})$. 
Let $\Prob$ be a probability measure operator, defined according to $\Prob \loss_f = \E [ \loss_f(X,\mathbf{Y}) ]$; here, $X$ is drawn from $\D$, and, conditionally on $X$, $\mathbf{Y} = (Y_1, \ldots, Y_m)$ is drawn from $(\D_X)^m$. For a fixed $f$, $\Prob$ takes $\loss_f$ to its expected value on a new draw from the distribution $X$ and an $m$-tuple $\mathbf{Y}$ from $\D_X$.
We also define the empirical probability measure operators $\Pn$ and $\cPn$ via 
\begin{align*}
\Pn\loss_f = \frac{1}{n} \sum_{i=1}^n \loss_f(X^*_j, \mathbf{Y}_i) 
\qquad \text{and} \qquad
\cPn\loss_f = \frac{1}{n} \sum_{i=1}^n \loss_f(X^*_i, \mathbf{\tilde{Y}}_i) .  
\end{align*}

Now according to Lemma \ref{lemma:union-argument-II} below, we have for any positive $t_1, t_2$,
\begin{align*}
&\Pr \left( \Prob \loss_{\cf} > \Prob \loss_{f^*} + t_1 + 2 t_2 \right) \\
&\leq \Pr \left( \sup_{f \in \F} \left\{ \Prob (\loss_f - \loss_{f^*}) - \Pn (\loss_f - \loss_{\hat{f}}) \right\} > t_1 \right) 
          + \Pr \left( \sup_{f \in \F} |(\Pn - \cPn) \loss_f| > t_2 \right) .
\end{align*}
Now, since the risk is $\sigma$-strongly convex, the first probability is at most $\delta / 3$ from Theorem 1 of \cite{sridharan2009fast} with $a = 1$ and $\lambda = 2 \sigma$, yielding the choice $t_1 = 8 L^2 B^2 \left( 32 + \log(3/\delta) \right) / (\sigma n)$.

The remainder of the proof controls the second probability. As we will see, we will be able to take $t_2 = O \left( \frac{B \log(1/\delta)}{n} \right)$ when the probability is at most $2 \delta/3$.

First recall that by using Algorithm~\ref{alg:improved} with our choice of parameters, Lemma~\ref{lem:improved-sampling} gives us that with probability at least $1 - \delta/3$, for all $i \in [n]$, 
there are at least $m$ points $X_{k_{i,1}}, \ldots, X_{k_{i,m}}$ within distance $\varepsilon$ of $X^*_i$, 
and all the $k_{1,1}, \ldots, k_{1,m}, \ldots, k_{n,1}, \ldots, k_{n,m} \in [N]$ are distinct.

Next, we make the observation that the \emph{observed} sample can be obtained by the following ``corruption'' modifications to $(\mathbf{Y}_i)_{i \in [n]}$.
\begin{enumerate}
\item For $i \in [n] \setminus \mathcal{I}_G$, draw $\tilde{Y}_{i,j}$ from distribution $\D_{X_{i,j}}$.
\item For $i \in \mathcal{I}_G$, observe that Assumption \eqref{eqn:tv-lipschitz} implies that, without loss of generality, we can view each $Y_{i,j}$ as drawn in the following way. First, set $\tilde{Y}_{i,j}$ to $Y_{i,j}$. Next, draw a Bernoulli random variable $Z_{i,j}$ with success probability $\tau := m K \varepsilon$, and if $Z_{i,j} = 1$, we ``corrupt'' $\tilde{Y}_{i,j}$ by setting it (again) to a new draw from some distribution $Q_{i,j}$ that can depend on both $X^*_i$ and $X_{i,j}$.
\end{enumerate}

For each $i$, if $Z_{i,j} = 1$, we say that $(i,j)$ are $\good$, and if $(i,1), \ldots, (i,m)$ all are $\good$, we say that $i$ is $\good$. If some $i$ is not $\good$, then it is $\bad$. 
Clearly, for each $i$ separately, with probability at least $1 - 1/n$ over $(Z_{i,j})_{j \in [m]}$ it holds that $i$ is $\good$ (recall that $m \tau = m K \varepsilon = 1/n$). 
Thus, from a multiplicative Chernoff bound\footnote{The bound being used is $\Pr(S_n \geq R) \leq 2^{-R}$ for $R \geq 6 \E [ S_n ]$, from equation (4.3) of \cite{mitzenmacher2005probability}, where $S_n$ is the sum of i.i.d.~Bernoulli random variables with success probability $1/n$.} the probability (over $(Z_{i,j})_{i \in [n], j \in [m]}$) that at least $\frac{3}{2} \log \frac{3}{\delta}$ of the $i$'s are $\bad$ is at most $\delta/3$ (for $ \delta < 3 e^{-4}$) (and our total probability of failure thus far is $2 \delta/3$). We denote the good set of indices by $\mathcal{I}'_G := \left\{i \in [n] \colon i \text{ is } \good \right\}$; this set has cardinality at least $n'_G = n - \frac{3}{2} \log \frac{3}{\delta}$ with probability at least $1 - 2 \delta/3$.

From the above argument, we see that with probability $1 - 2 \delta/3$, 
at most $n'_B := n - n'_G$ corruption modifications occur, and hence
\begin{align*}
\sup_{f \in \F} \left|
    \frac{1}{n} \sum_{i=1}^n \loss_f(X^*_i, \mathbf{\tilde{Y}}_i)
    - \frac{1}{n} \sum_{i=1}^n \loss_f(X^*_i, \mathbf{Y}_i) 
\right| 
\leq \frac{B (n - n'_G)}{n} .
\end{align*}

Thus, we may take $t_2 = \frac{\frac{3}{2} B \log \frac{3}{\delta}}{n}$.
\end{Proof}

\begin{lemma} \label{lemma:union-argument-II}
  Under the hypotheses of Theorem \ref{thm:bound-imperfect-sampling} we have the following, with probability taken over the random sample (i.e.~$\cf$, $\Pn$, and $\cPn$ are functions of the random sample):
  \begin{align*}
  &\Pr \left( \Prob \loss_{\cf} > \Prob \loss_{f^*} + t_1 + 2 t_2 \right) \\
  &\leq \Pr \left( 
\sup_{f \in \F} \left\{ \Prob (\loss_f - \loss_{f^*}) - \Pn (\loss_f - \loss_{\hat{f}}) \right\} > t_1 \right) 
            + \Pr \left( \sup_{f \in \F} |(\Pn - \cPn) \loss_f| > t_2 \right) .
  \end{align*}
\end{lemma}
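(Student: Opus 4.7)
The plan is to follow exactly the same template as the proof of Lemma~\ref{lemma:union-argument}: establish a deterministic, pointwise implication showing that the intersection of the two ``good'' events on the right-hand side forces $\Prob \loss_{\cf} \leq \Prob \loss_{f^*} + t_1 + 2 t_2$, then take contrapositives and apply a union bound. The only substantive change from Lemma~\ref{lemma:union-argument} is that the first event is now the \emph{localized} discrepancy $\sup_{f \in \F}\{\Prob(\loss_f - \loss_{f^*}) - \Pn(\loss_f - \loss_{\hat{f}})\} \leq t_1$, which has absorbed the two previous events ``uniform deviation'' and ``one-sample Hoeffding at $f^*$'' into a single, tighter event. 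That localized event is exactly the quantity controlled at the fast rate $1/n$ by Theorem~1 of~\cite{sridharan2009fast} under strong convexity, and this is what drives the faster bound of Theorem~\ref{thm:bound-faster}.

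For the pointwise implication, suppose both events hold. Instantiating the first event at $f = \cf$ gives $\Prob(\loss_{\cf} - \loss_{f^*}) \leq \Pn(\loss_{\cf} - \loss_{\hat{f}}) + t_1$. Lemma~\ref{lemma:loss-cf-to-loss-fhat}, which was already used in the proof of Lemma~\ref{lemma:union-argument} and which is purely deterministic (it only uses the optimality of $\cf$ under $\cPn$), then yields $\Pn \loss_{\cf} - \Pn \loss_{\hat{f}} \leq 2 \sup_{f \in \F} |(\Pn - \cPn) \loss_f|$, which is at most $2 t_2$ on the second event. Summing the two bounds produces $\Prob \loss_{\cf} \leq \Prob \loss_{f^*} + t_1 + 2 t_2$, which is the negation of the left-hand event in the lemma.

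There is essentially no obstacle here: the content of the lemma is a reorganization of the decomposition used in the slow-rate case, with the new localized first event replacing the pair ``uniform deviation plus Hoeffding at $f^*$'' that was needed in Lemma~\ref{lemma:union-argument}. The point of the reorganization is to isolate exactly the quantity that strong convexity controls at a fast rate; the substantive probabilistic work of bounding that quantity is deferred to the outer proof of Theorem~\ref{thm:bound-faster}. After the implication is established, taking complements of both sides and applying a union bound on the two possible failure modes closes the argument.
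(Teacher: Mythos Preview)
Your proposal is correct and follows essentially the same approach as the paper's own proof: instantiate the localized deviation event at $f = \cf$, apply Lemma~\ref{lemma:loss-cf-to-loss-fhat} to bound $\Pn(\loss_{\cf} - \loss_{\hat f})$ by $2\sup_{f\in\F}|(\Pn - \cPn)\loss_f|$, combine, then take contrapositives and union bound. Your additional commentary explaining why the localized event replaces the pair of events from Lemma~\ref{lemma:union-argument} is accurate and helpful.
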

\begin{proof}
First, observe that (using $\ind{E}$ for the 0-1 indicator function of a random event)
\begin{align*}
&\ind{ 
      \left( \sup_{f \in \F} \left\{ \Prob (\loss_f - \loss_{f^*}) - \Pn (\loss_f - \loss_{\hat{f}}) \right\} \leq t_1 \right) 
      \opmin \left( \sup_{f \in \F} |(\Pn - \cPn) \loss_f| \leq t_2 \right) 
} \\
&\leq \ind{ \Prob \loss_{\cf} \leq \Prob \loss_{f^*} + t_1 + 2 t_2 } .
\end{align*}
To see this,
\begin{align*}
\Prob \loss_{\cf}
\leq \Prob \loss_{f^*} + \Pn ( \loss_{\cf} - \loss_{\hat{f}} ) + t_1 
\leq \Prob \loss_{f^*} + 2 \sup_{f \in \F} \left| (\Pn - \cPn) \loss_f \right| + t_1 ,
\end{align*}
where the second inequality is from Lemma~\ref{lemma:loss-cf-to-loss-fhat}.

By subtracting each side from one and rearranging, we get an implication on the negation of these events
\begin{align*}
&\ind{ 
      \left( \sup_{f \in \F} \left\{ \Prob (\loss_f - \loss_{f^*}) - \Pn (\loss_f - \loss_{\hat{f}}) \right\} > t_1 \right) 
      \opmax \left( \sup_{f \in \F} |(\Pn - \cPn) \loss_f| > t_2 \right) 
} \\
&\geq \ind{ \Prob \loss_{\cf} > \Prob \loss_{f^*} + t_1 + 2 t_2 + t_3 }
\end{align*}
and we can use the union bound.
\end{proof}

\section{Lower Bounds for High Dimensions} \label{sec:lower}

\newcounter{temptheorem2}
\setcounter{temptheorem2}{\value{theorem}}
\setcounter{theorem}{\value{booldimlowercntr}}
\booldimlower*
\setcounter{theorem}{\value{temptheorem2}}
\begin{proof}
  We consider the simplest nontrivial hypothesis class, constant functions (i.e.~the set $\{f_c: c \in \reals\}$ where each $f_c: x \mapsto c ~~(\forall x)$.
  The instances we construct will be realizable, i.e.~in each instance, there will exist a constant $c$ such that $\Var(y \mid x) = c ~~(\forall x)$.

  Consider the discrete uniform distribution on the boolean hypercube $\X = \{0,1\}^d$.
  We have $\Y = \{0,1\}$.
  In instance $A$, $\Pr[y=1 \given x] = 0.5$ independently for all $x$, and $\Var(y \given x) = 0.25$ for all $x$.
  In the family of instances $\B$, we construct an instance $B$ by drawing, for each $x$, $\theta_x \in \{0,1\}$ uniformly and independently at random; then conditioned on $x$, the distribution of $y$ is given by $\Pr[y = 1 \given x] = \theta_x$.
  Notice that for all instances in the family $\B$, $\Var(y \given x) = 0$ for every $x$.
  Also, the Lipschitz constant satisfied by instances in $\B$ is $1$, as all distinct $x$ lie at a distance at least $1$ from each other.

  \emph{Informal sketch.}
  Any algorithm that is accurate must, with probability close to $1$, produce a different output when given access to $A$ than when given access to a uniform instance from $B$.
  However, by the principle of deferred decisions, we can rewrite the algorithm's behavior in the latter case as follows: Each time a uniformly random $x$ is drawn, if the algorithm has already seen a sample from $x$, then set $y$ consistent with that previous sample; otherwise, draw $\theta_x$ uniformly at random and set $y = \theta_x$.

  Thus, the input to the algorithm is distributed exactly identically in both cases unless the algorithm obtains multiple samples from the same $x$.
  However, with $o(2^{0.5d})$ samples, the probability of this occurring is $o(1)$ (by the ``birthday paradox''), so the algorithm has the same distribution of outputs with probability $1 - o(1)$.

  \emph{Formal proof.}
  Let $M$ be an algorithm and write $M(A)$ for the random variable which is $M$'s hypothesis when run on samples from $A$, while $M(B)$ is $M$'s hypothesis when run on samples from a uniformly randomly chosen instance $B$ from the family $\B$.
  Suppose that $M$ satisfies that, with probability at least $\frac{2}{3}$, its hypothesis (which is some constant $c$) is within $\epsilon$ of the correct variance, i.e.~$M(A) \geq 0.25-\epsilon$ and $M(B) \leq \epsilon$ each with probability at least $\frac{2}{3}$.\footnote{One can also state this condition as an $\approx \epsilon$ generalization error guarantee for $M$ with appropriate loss function.}
  Suppose $\epsilon < 0.125$ and the number of samples drawn by $M$ is $o(2^{0.5d})$; we show a contradiction.

  Use $s$ to denote a set of samples each of the form $(x,y)$, and let $NR$ denote those sample-sets which have ``no repeated $x$'s'', i.e.~$NR = \{s : \text{each $x$ in $s$ is unique}\}$.
  Use $\Pr_A[s]$ to denote the probability of drawing a set of samples $s$ given access to $A$, with $\Pr_B[s]$ the probability of drawing $s$ given access to a uniformly random instance from $B$, and so on.
  We have
  \begin{align}
    \frac{2}{3}
      &\leq \Pr[ M(A) \geq 0.25-\epsilon ] \nonumber \\
      &=    \sum_s \Pr_A[s] \Pr[ M(s) \geq 0.25-\epsilon] \nonumber \\
      &=    \sum_{s \in NR} \Pr_A[s] \Pr[ M(s) \geq 0.25-\epsilon ]  ~ + ~ \sum_{s \not\in NR} \Pr_A[s] \Pr[ M(s) \geq 0.15]  \nonumber \\
      &\leq \sum_{s \in NR} \Pr_A[s] \Pr[ M(s) \geq 0.25-\epsilon ]  ~ + ~ \sum_{s \not\in NR} \Pr_A[s] .  \label{eqn:birthday-lower-1}
  \end{align}
  Now, for all $s \in NR$, we claim $\Pr_A[s] = \Pr_B[s]$, as each is equal to
  \begin{align*}
    \prod_{(x,y) \in s} \Pr[x] \Pr[y \given x] &= \prod_{(x,y) \in s} 2^{-d} \left(\frac{1}{2}\right).
  \end{align*}
  (In the case of $A$, this is immediate; in the case of $B$, by the principle of deferred decisions, we can construct $B$ piece-by-piece; as each sample $(x,y) \in s \cap NR$ is drawn, we draw $\theta_x \in \{0,1\}$ uniformly at random and set $\Pr[y = 1] = \theta_x$, which results in a uniform distribution on $y$.)

  Meanwhile, $\sum_{s \not\in NR} \Pr_A[s]$ is the probability of drawing a sample with some repeated $x$ value, which we claim is $o(1)$ with $o(2^{0.5d})$ samples.
  The distribution is uniform on the $2^d$ possible $x$ values.
  The probability of a repeat or ``collision'', by Markov's inequality, is at most the expected number of collisions; with $m$ samples, there are ${m \choose 2}$ pairs each with a $2^{-d}$ chance of collision, so the expected number of collisions is $O\left(\frac{m^2}{2^d}\right)$, which is $o(1)$ for $m = o(2^{0.5d})$.

  So with $o(2^{0.5d})$ samples, we have by (\ref{eqn:birthday-lower-1}) that
  \begin{align*}
    \frac{2}{3}
      &\leq \sum_{s \in NR} \Pr_B[s] \Pr[ M(s) \geq 0.25-\epsilon ] ~ + ~ o(1) \\
      &\leq \Pr[ M(B) \geq 0.25-\epsilon ] + o(1)
  \end{align*}
  which, if $\epsilon < 0.125$, implies that $\Pr[ M(B) \leq \epsilon ] \leq \frac{1}{3} + o(1)$.
  This contradicts the accuracy assumption that $\Pr[ M(B) \leq \epsilon] \geq \frac{2}{3}$, so with this small number of samples, no such accurate $M$ exists.
\end{proof}
  
\begin{theorem} \label{thm:uniform-dimension-lower}
  With a uniform distribution on the unit hypercube $[0,1]^d$ in $d$ dimensions, with Lipschitz constant $K = d$, there is no algorithm for regression on variance with nontrivial accuracy drawing $o(2^{0.5d})$ samples.

  More precisely, estimating average variance over the hypercube to accuracy $\epsilon < \frac{1}{32}$ with success rate at least $\frac{2}{3}$ requires $\Omega(2^{0.5d})$ samples.
\end{theorem}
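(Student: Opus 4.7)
The plan is to adapt the argument of Theorem~\ref{thm:boolean-dimension-lower} to the continuous setting, replacing the discrete cells of the boolean hypercube with continuous cells in $[0,1]^d$ whose conditional distributions are compatible with the Lipschitz constant $K = d$.

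\textbf{Construction.} First I would partition $[0,1]^d$ into the $2^d$ axis-aligned cubes $C_v$ of side $1/2$, indexed by $v \in \{0,1\}^d$. For instance $A$, set $\Pr[y=1 \mid x] = 1/2$ everywhere, so that $\Var(y \mid x) = 1/4$. For the family $\B$, independently draw $\theta_v \in \{0,1\}$ uniformly for each $v$, and, for $x \in C_v$, set
\[
p(x) \;:=\; \Pr[y=1 \mid x] \;=\; \frac{1}{2} + \left(\theta_v - \frac{1}{2}\right) \min\bigl(1,\, 2 d \cdot r_v(x)\bigr),
\]
where $r_v(x)$ denotes the Euclidean distance from $x$ to the boundary of $C_v$. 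Thus $p = 1/2$ on every cell face (so $p$ is continuous across cells) and $p = \theta_v$ on the ``core'' $\{x \in C_v : r_v(x) \geq 1/(2d)\}$.

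\textbf{Lipschitz verification.} Inside a single cell, $p$ is $d$-Lipschitz because $r_v$ is $1$-Lipschitz, the clipped ramp $t \mapsto \min(1, 2dt)$ has slope at most $2d$, and it is scaled by $|\theta_v - 1/2| = 1/2$. For points in different cells, a segment between them crosses a cell face at a point where $p = 1/2$, and the Lipschitz bound chains along the segment. Since the TV distance between two Bernoulli distributions equals the absolute difference of their parameters, this verifies \eqref{eqn:tv-lipschitz} with $K = d$.

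\textbf{Variance gap and birthday coupling.} Writing $g(x) = \min(1, 2 d \cdot r_v(x))$, I would observe that $\Var(y \mid x) = \frac{1}{4}(1 - g(x)^2)$, which does not depend on the $\theta_v$'s; hence every $B \in \B$ shares the same $\E_X[\Var(y \mid X)]$. Using that the volume of the $r$-shell of a side-$s$ cube is $s^d - (s-2r)^d$, a direct integration shows that the core occupies a fraction $(1 - 2/d)^d \to e^{-2}$ of $[0,1]^d$, so for all sufficiently large $d$ the gap $\E_A[\Var(y \mid X)] - \E_B[\Var(y \mid X)]$ is bounded below by an absolute constant exceeding $1/16 = 2 \cdot \frac{1}{32}$. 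The remainder of the argument then mirrors the proof of Theorem~\ref{thm:boolean-dimension-lower} almost verbatim: averaging $p(x)$ over the random $\theta_v$ gives $1/2$ pointwise, so the marginal of a single $(x,y)$ under a uniformly random $B$ matches that under $A$; and if all $m$ sampled $x$'s land in distinct cells, the joint laws agree. A Markov bound gives collision probability at most $\binom{m}{2}/2^d = o(1)$ whenever $m = o(2^{d/2})$, and the deferred-decisions coupling then reduces distinguishing $A$ from a uniformly random $B$ to witnessing a collision, yielding the stated lower bound.

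\textbf{Main obstacle.} The delicate step is balancing the Lipschitz constant $K = d$ (which forces the boundary interpolation layer to have thickness $\Theta(1/d)$) against the need for a constant-sized variance gap. Choosing constant-side cells (side $1/2$, so $2^d$ of them) is exactly what preserves this balance: the core fraction $(1 - 2/d)^d$ stays bounded below as $d \to \infty$, whereas finer cells would make the cores vanish. Turning this into the specific threshold $\epsilon < 1/32$ promised by the theorem demands only a routine but careful integration of $g(x)^2$ against the boundary-shell measure.
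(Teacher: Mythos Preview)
Your approach is correct and tracks the paper's strategy closely: partition $[0,1]^d$ into $2^d$ regions indexed by $v\in\{0,1\}^d$, give each a random Bernoulli parameter $\theta_v$, interpolate to $p=1/2$ on region boundaries to meet the Lipschitz constraint, and reuse the birthday/deferred-decisions argument of Theorem~\ref{thm:boolean-dimension-lower}. The geometric instantiation differs: the paper puts the plateau $p=\theta_v$ in a sub-cube touching each \emph{vertex} $v$ of $[0,1]^d$ and interpolates only on the $d$ inward-facing faces, whereas you put the plateau at the \emph{center} of each half-cube and must interpolate on all $2d$ faces. Consequently the paper's plateau fraction is $(1-1/d)^d\to e^{-1}$, while yours is $(1-2/d)^d\to e^{-2}$.

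That difference exposes the one real slip in your write-up. The core-only lower bound on the variance gap is $\tfrac14(1-2/d)^d\to e^{-2}/4\approx 0.034$, which is \emph{below} $1/16$, not above it as you assert. Your construction still clears the $\epsilon<1/32$ threshold, but only once the shell contribution is included: since $\Pr[g>s]=(1-2s/d)^d$ for $s\in[0,1)$, one has
\[
\E[g^2]=\int_0^1 2s\,(1-2s/d)^d\,ds \;\longrightarrow\; \tfrac12\bigl(1-3e^{-2}\bigr)\approx 0.297,
\]
giving an asymptotic gap of roughly $0.074>1/16$. So the ``routine integration'' you defer to the end is not cosmetic --- it is what actually delivers the stated constant. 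The paper's corner placement sidesteps this: its plateau volume alone already yields the $1/16$ gap without any shell calculation.
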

\begin{proof}
  The construction is very similar to the Boolean hypercube above.
  We have $\Y = \{0,1\}$.
  On instance $A$, for every $x$, $\Pr[ y = 1 \given x] = 0.5$.
  Hence, $\Var(y \given x) = 0.25$ for all $x$.

  \paragraph{Constructing $\B$.}
  We now construct the family of instances $\B$ and show that each has Lipschitz constant $K \leq d$ and has average variance $\E_x \Var(y \given x) \leq \frac{3}{16}$.

  In each instance, the hypercube is divided into ``corners'' and ``interior regions''.
  Let $\beta = \frac{1}{2} \frac{1}{2d}$ and let $\alpha = \frac{1}{2} - \beta$.
  Each ``corner'' $C_v$ is a hypercube of side length $\alpha$, inscribed in the unit hypercube and sharing the vertex $v \in \{0,1\}^d$.
In other words, $C_v = \{x : \|x - v\|_{\infty} \leq \alpha\}$.
  The portions of $[0,1]^d$ not contained in any corner are considered the ``interior regions''.

  To construct an instance in the family $\B$, draw $\theta_v \in \{0,1\}$ i.i.d. uniformly for each $v$.
  For points $x$ in some corner $C_v$, we have $\Pr[y = 1 \given x ] = \theta_v$.
  For points $x$ in the interior regions, let the notation $\|x - C_v\|_2$ denote $\min_{x' \in C_v} \|x' - x\|_2$.
  If there exists a $v$ such that $\|x-C_v\|_2 < \beta$, then, letting $r = \|x-C_v\|_2$, set $\Pr[y = 1 \given x] = \left(1 - \frac{r}{\beta}\right)\theta_v + \left(\frac{r}{\beta}\right)\left(\frac{1}{2}\right)$.
  (Note that this can only be true for at most one $v$, as this implies $\|x - v\|_{\infty} < \frac{1}{2}$, i.e.~$x$ is contained within the ``corner'' of side length $\frac{1}{2}$ touching $v$.)
  For all other $x$ (those not within $\beta$ of any $C_v$), we have $\Pr[ y = 1 \given x] = \frac{1}{2}$.

  Now we show that any instance in family $B$ has Lipschitz constant $K = d$.
  For shorthand, write $p_x := \Pr[y = 1 \given x]$.
  Note that total variation distance between the distributions on $y$ at $x$ and at $x'$ is $|p_x - p_{x'}|$.
  The Lipschitz constant is bounded by the maximal directional derivative of $p_x$ with respect to $x$ in any direction.
  This is zero if $x$ lies within some $C_v$ or if $x$ is not within distance $\beta$ of some $C_v$.
  Otherwise, if $x' = \argmin_{x'' \in C_v} \|x'' - x\|_2$, then the absolute value of the directional derivative is maximized in the direction $x' - x$, where it is $\frac{1}{2\beta}$.
  This gives a Lipschitz constant of $\frac{1}{2\beta}$.

  Now we bound the average variance of $y$ given $x$.
  The volume of each corner $C_v$ is $\alpha^d$ and there are $2^d$ corners, so the total volume of the corners is
  \begin{align*}
    2^d \alpha^d
      &= 2^d \left(\frac{1}{2} - \frac{1}{2}\frac{1}{d}\right)^d  \\
      &= \left(1 - \frac{1}{d}\right)^d  \\
      &\geq \left(1 - \frac{1}{2}\right)^2  \\
      &= \frac{1}{4}
  \end{align*}
  assuming $d \geq 2$.
  For any $x \in C_v$ for any $v$, $\Var(y \given x) = 0$.
  For all other $x$, $\Var(y \given x) \leq 0.25$; and the volume computation shows that they make up at most $\frac{3}{4}$ of the hypercube.
  Hence, average variance in this instance is at most $0.25 \left(\frac{3}{4}\right) = \frac{3}{16}$.

  (We note that, by letting $\beta = \frac{1}{2} \frac{1}{C d}$ for $C \geq 1$, the same derivation gives Lipschitz constant $K = \frac{C}{d}$ and an average variance bounded by $\frac{4C-1}{16C^2} \leq \frac{1}{4C}$.)

  \paragraph{Indistinguishability.}
  From here, the proof is almost identical to the Boolean case.
  Consider any algorithm $M$ that is with $\epsilon < \frac{1}{32}$ of the correct average variance with probability at least $\frac{2}{3}$.
  For each vertex $v$, let $R_v = \{x : \|x - v\|_{\infty} < \frac{1}{2}\}$.
  There are $2^d$ disjoint regions $R_v$, each with volume $\frac{1}{2^d}$, and with probability $1$, each sample $(x,y)$ has $x \in R_v$ for some $v$.

  Let $s$ denote a set of samples drawn by $M$ and let $NR = \{s: \text{$s$ has no repeated $x$s}\}$.
  If $M$ draws $o(2^d)$ samples, then $\Pr[s \in NR] = o(1)$.
  If $s \not\in NR$, then we claim $\Pr_A[s] = \Pr_B[s]$, where the notation is shorthand for the probability of drawing the samples $s$ given oracle access to $A$, or to a uniformly chosen member $B$ of $\B$, respectively.
  The reason is that, by the principle of deferred decisions, we can in the case of $B$ choose $\theta_v$ at the moment that a sample $(x,y)$ is drawn with $x \in R_v$, which occurs at most once for each $v$ because $s \not\in NR$.
  Because the distribution on $\theta_v$ is uniform $\{0,1\}$, for any $x \in R_v$, the distribution of $p_x = \Pr[y = 1 \given x]$ is uniform around $\frac{1}{2}$, or in other words, the unconditional probability that this $y = 1$ is exactly $\frac{1}{2}$.

  So,
  \begin{align*}
    \frac{2}{3}
      &\leq \Pr[ M(A) \geq 0.25 - \epsilon ]  \\
      &=    \sum_{s \in NR} \Pr_A[s] \Pr[M(s) \geq 0.25 - \epsilon] ~ + ~ o(1)  \\
      &=    \sum_{s \in NR} \Pr_B[s] \Pr[M(s) \geq 0.25 - \epsilon] ~ + ~ o(1)  \\
      &\leq \Pr[ M(B) \geq 0.25 - \epsilon ] ~ + ~ o(1) .
  \end{align*}
  For $\epsilon < \frac{1}{32}$, this contradicts the accuracy requirement that $\Pr[M(B) \leq \frac{3}{16} + \epsilon]$ with probability $\frac{2}{3}$.
\end{proof}

\section{Supporting arguments for Lipschitz regression lower bound}
\label{app:lipschitz-lower-bound}

In this section, we provide the full details linking the minimax lower bound of \cite{stone1982optimal} to our application in \S~\ref{sec:stone-lower-bound}. All notation in this section is from \cite{stone1982optimal}. 

Take $\X = [0, 1]^d$ and let $\D$ be the uniform distribution over $\X$. For all $x \in \X$, the distribution $\D_x$ is a certain subclass of Bernoulli distributions\footnote{See Condition 2 of \cite{stone1982optimal} for details; in our setting, we have $t = \theta(x) = \E [ Y \mid X = x ]$, so the Bernoulli distribution can vary with $x$, as further explained on p.~1350 of \citep{stone1980optimal}.} with $x \mapsto \E [ Y \mid X = x ]$ a $K$-Lipschitz function.
 Then for any estimator $\hat{g}$, there exists a law $Y \mid X$ satisfying the aforementioned assumptions such that
\begin{align*}
  \E \left[ ( \hat{g}(X) - \E [ Y \mid X ] )^2 \right] = \Omega \left( n^{-2 / (2 + d)} \right) ,
\end{align*}
and there is a matching upper bound, so that this is the minimax optimal rate of convergence for this problem. 

To see how the above result follows from \cite{stone1982optimal}, we take $T(\theta) = \theta$ and observe that the Lipschitz condition is reflected in Stone's equation (1.2) by setting $K_2$ to our $K$, $k$ to zero, and $\beta = 1$ (so that $p = k + \beta = 1$ and hence $r = 1 / (2 + d)$). Since $\theta$ is the parameter of a Bernoulli distribution, we need to verify from the lower bound construction that for all choices of $\theta \in \Theta_n$ used in the lower bound, we have $\theta \in (0, 1)$. We now do this verification.

As mentioned in the proof of Lemma 1 of \cite{stone1982optimal}, for any binary sequence $\tau_n \in \{0,1\}^{V_n}$, the corresponding\footnote{We use the notation $g^{(\tau)}_n$ rather than Stone's notation $g_n$ to make explicit the dependence on $\tau$.} $g^{(\tau)}_n$ vanishes at the boundary of $[0, 1]^d$. Moreoever, by assumption each $g_n^{(\tau)}$ is $K$-Lipschitz for $K = 1/(2 \sqrt{d})$. Let us verify that, for all $\theta \in \Theta_n$ and for all $x \in [0,1]^d$, it holds that $\theta(x) \in (0,1)$. Let $\theta_0 \equiv \frac{1}{2}$, and note that $\theta(x) = \theta_0(x) + g^{(\tau)}(x)$. It suffices to show that $|g^{(\tau)}_n(x)| \leq \frac{1}{2 \sqrt{2}}$ for all $\tau \in \{0,1\}^d$ and all $x \in [0,1]^d$. 
This is easily verified. First, observe that
\begin{align*}
|g^{(\tau)}_n(x)| 
= \left| g^{(\tau)}_n(x) - g^{(\tau)}_n(\mathbf{0}) \right| 
= \left| g^{(\tau)}_n(x) - g^{(\tau)}_n(\mathbf{1}) \right| ,
\end{align*}
since $g^{(\tau)}$ vanishing at the boundary of $[0,1]^d$ implies that $g^{(\tau)}_n(\mathbf{0}) = g^{(\tau)}_n(\mathbf{1}) = 0$. 
Therefore, 
\begin{align*}
|g^{(\tau)}_n(x)| 
&= \min \left\{ \left| g^{(\tau)}_n(x) - g^{(\tau)}_n(\mathbf{1}) \right|,
                         \left| g^{(\tau)}_n(x) - g^{(\tau)}_n(\mathbf{0}) \right| \right\} \\
&\leq K \min \left\{ \|x - \mathbf{1}\|,
                                 \|x - \mathbf{0}\| \right\} \\
&= K \left\|\frac{1}{2} \cdot \mathbf{1} \right\| \\
&= K \sqrt{d/2} \\
&= \frac{1}{2 \sqrt{2}} .
\end{align*}
The result then follows by applying Theorem 1 of \cite{stone1982optimal} with $q = 2$.

\section{Eliciting the Upper Confidence Bound}
\label{app:ucb}

Given a random variable $Y$, define $\ucb_\lambda(Y) = \E[Y] + \lambda\sigma[Y]$, where $\sigma[Y] = \sqrt{\Var[Y]} = \sqrt{\E[Y^2]-\E[Y]^2}$.
We show here the motivation behind the loss function used in \S~\ref{sec:experiments}.

The level sets of $\ucb_\lambda$, as a function of the \emph{law} of $Y$, i.e., the distribution, are given by
\begin{equation}
  \label{eq:elicit-ucb-1}
  r = \ucb_\lambda(Y) = \E[Y] + \lambda\sigma[Y]~.
\end{equation}

We can rewrite this as follows:
\begin{align}
  r - \E[Y] &= \lambda\sigma[Y]
  \\
  (r - \E[Y])^2 &= \lambda^2\Var[Y]\label{eq:elicit-ucb-6}
  \\
  0 &= \lambda^2\Var[Y] - \E[Y]^2 + 2r\E[Y] - r^2~,\label{eq:elicit-ucb-7}
\end{align}
though note that we have introduced another solution: both $r = \E[Y]+\lambda\sigma[Y]$ and $r=\E[Y]-\lambda\sigma[Y]$ now satisfy eq.~\eqref{eq:elicit-ucb-7} but only the former satisfies eq.~\eqref{eq:elicit-ucb-1}.
Apart from this spurious solution, the following would be an identification function for $\ucb_\lambda$, meaning a distribution has zero expectation if and only if it is in the level set for $r$; see~\citep{lambert2011elicitation,steinwart2014elicitation}.
\begin{equation}
  \label{eq:elicit-ucb-2}
  V(r,y_1,y_2) = \frac {\lambda^2} 2 (y_1 - y_2)^2 - y_1y_2 + (y_1+y_2)r - r^2~,
\end{equation}
whence
\begin{equation}
  \label{eq:elicit-ucb-3}
  \E[V(r,Y_1,Y_2)] = \lambda^2 \Var[Y] - \E[Y]^2 + 2\E[Y]r - r^2~,
\end{equation}
where of course $Y_1,Y_2\sim Y$ are independent.

Despite the fact that $V$ does not completely identify $\ucb_\lambda$, as for a given $r$, distributions where $\E[Y]-\sigma[Y] = r$ also satisfy $\E[V(r,Y_1,Y_2)]=0$, we can still try to integrate $-V$ with respect to $r$ to get a loss function.  The result is
\begin{equation}
  \label{eq:elicit-ucb-4}
  \loss(r,y_1,y_2) = -\left(\frac {\lambda^2} 2 (y_1 - y_2)^2 - y_1y_2\right)r - \frac 1 2 (y_1+y_2)r^2 + \frac 1 3 r^3~.
\end{equation}

As $V$ was not a true identification function, we know that $\E[\loss(r,Y_1,Y_2)]$ has multiple extrema, at $r=\E[Y]\pm \lambda\sigma[Y]$.
What's worse, since $\loss$ is cubic, we see that one can actually achieve arbitrarily negative loss as $r \to -\infty$.
Still, we can impose conditions on $r$ and $Y$ so that $r = \ucb_\lambda(Y)$ is the unique minimizer of $\E[\loss]$.
In particular, if we restrict $r$ to the range $\E[Y]-2\lambda\sigma[Y] \leq r \leq \infty$, then the loss will elicit $\ucb_\lambda$.
If we further restrict $r \geq \E[Y]-\lambda\sigma[Y]$, then $\E[\loss]$ is quasi-convex.
Finally, if we further restrict $r \geq \E[Y]$, then $\E[\loss]$ is convex.

Another possible loss is obtained by integrating $-rV$, that is, $r$ times $-V$.  This gives us,
\begin{align*}
  \loss(r,y_1,y_2) = -\frac 1 2 \left(\frac {\lambda^2} 2 (y_1 - y_2)^2 - y_1y_2\right) r^2 - \frac 1 3 (y_1+y_2) r^3 + \frac 1 4 r^4~.
\end{align*}
This removes the problem of unbounded negative loss for incorrect reports, but can still have a local optimum at $r = \E[Y] - \lambda\sigma[Y]$.

\section{Simulation Details}
\label{app:sims}
\paragraph{Algorithm implementation.} For Algorithms~\ref{alg:improved} and \ref{alg:nearby} we used $\epsilon = \frac{1}{2\sqrt{n}}$, which we found to generally perform the best.\footnote{For $\rho_k$ larger values of $\epsilon$ performed better for Algorithm~\ref{alg:nearby}, but caused it to be very computationally expensive.  This is because the number of metasamples is roughly $(n \epsilon)^m$ and for $\rho_4$, $m=5$.} There is some question about how to apply Algorithm~\ref{alg:improved} in the setting where one is given a fixed set of samples, rather than an oracle for drawing samples.  The strategy that we found to work best, which we used here, was given $\hat{n}$ samples, to use the first $n$ for $X_1^*,\ldots,X_n^*$ and the remaining $N = \hat{n} - n$ for $X^{(1)}_1,\ldots,X^{(1)}_N$.  We then binary searched to find the largest $n$ which allowed a perfect matching.\footnote{Note that this modification may introduce some bias so our theoretical results about Algorithm~\ref{alg:improved} no longer directly apply.  Nevertheless, we found this modification to be effective in practice.}  In all of our experiments, our two-observation methods used the linear functions as their hypothesis class.  In all cases, the true statistic is within this class. 

\paragraph{Empirical approach.} For $\ucb_\lambda$ we compared against the standard strategy used in practice.  This strategy is to sample $n / k$ random points $x_1,\ldots,x_{n/k}$ from $X$, and then for each $x_i$ sample $k$ values $y_{i,1},\ldots,y_{i,k} \sim Y | X\tinyeq x_i$.  For each $i$, we compute the empirical $\ucb_\lambda$ $u_i$ of $y_{i,1},\ldots,y_{i,k}$ and then fit a line to $(x_1,u_1),\ldots,(x_{n/k},u_{n/k})$ via least-squares regression.  We found that best results were achieved by letting $k = \sqrt{n}$.  We used an analogous strategy for $\rho_k$ where we empirically estimated $\rho_k$ using $k+1$ $y$ samples for each $x$.  

\paragraph{Evaluation framework.} As we used simple underlying statistics ($1$, $x+10$ and $x+2$) and simple hypothesis classes, we were able to compute the mean squared error between an algorithm's reported hypothesis and the true statistic via closed form expressions.  The only exception to this was for the two moment method of learning $\ucb_\lambda$, where the instead we estimated the mean squared error via 1000 sample Monte Carlo integration.  Each data point in Figure~\ref{fig:sims} is the median of 1000 independent trials.

\end{document}